\newcommand{\eg}{\emph{e.g.}}
\newcommand{\ie}{\emph{i.e.}}
\newcommand{\EE}{\mathbb{E}} 
\newcommand{\RR}{\mathbb{R}} 
\newcommand{\Hcal}{\mathcal{H}} 
\newcommand{\Otil}{\tilde{O}}
\newcommand{\Ytil}{\tilde{Y}}
\newcommand*{\argmin}{\mathop{\mathrm{argmin}}}
\newcommand{\inner}[2]{\left\langle #1,#2 \right\rangle}
\newcommand{\rbr}[1]{\left(#1\right)}
\newcommand{\sbr}[1]{\left[#1\right]}
\newcommand{\cbr}[1]{\left\{#1\right\}}
\newcommand{\nbr}[1]{\left\|#1\right\|}
\newcommand{\tr}{\mathop{\mathrm{tr}}}
\newcommand{\diag}{\mathop{\mathrm{diag}}}
\newtheorem{theorem}{Theorem}
\newtheorem{lemma}{Lemma}
\newtheorem{claim}{Claim}
\newtheorem{definition}{Definition}
\newtheorem*{rep@theorem}{\rep@title}
\newenvironment{oneshot}[1]{\def\rep@title{#1} \begin{rep@theorem}}{\end{rep@theorem}}
\newenvironment{proofsketch}{\par {\it Proof Sketch.}}{\endproof}
\newcommand{\CP}{\textrm{Master}}
\newcommand{\CS}{$\textsc{CountSketch}$\xspace}
\newcommand{\TS}{$\textsc{TensorSketch}$\xspace}
\newcommand{\nnz}{\mathrm{nnz}}
\newcommand{\nsparse}{\rho}
\title{Communication Efficient Distributed Kernel Principal Component Analysis}
\author{
Maria-Florina Balcan\thanks{School of Computer Science, Carnegie Mellon University. Email: \texttt{ninamf@cs.cmu.edu}}, ~
Yingyu Liang\thanks{Department of Computer Science, Princeton University. Email: \texttt{yingyul@cs.princeton.edu}},~
Le Song\thanks{College of Computing, Georgia Institute of Technology. Email:\texttt{lsong@cc.gatech.edu}},~
David Woodruff\thanks{Almaden Research Center, IBM Research. Email: \texttt{dpwoodru@us.ibm.com}},~
Bo Xie\thanks{College of Computing, Georgia Institute of Technology. Email:\texttt{bo.xie@gatech.edu}}
}
\date{}
\begin{document}

\maketitle
\begin{abstract}
Kernel Principal Component Analysis (KPCA) is a key machine learning algorithm for extracting nonlinear features from data. In the presence of a large volume of high dimensional data collected in a distributed fashion, it becomes very costly to communicate all of this data to a single data center and then perform kernel PCA. Can we perform kernel PCA on the entire dataset in a distributed and communication efficient fashion while maintaining provable and strong guarantees in  solution quality?    

In this paper, we give an affirmative answer to the question by developing a communication efficient algorithm to perform kernel PCA in the distributed setting. The algorithm is a clever combination of subspace embedding and adaptive sampling techniques, and we show that the algorithm can take as input an arbitrary configuration of distributed datasets, and compute a set of global kernel principal components with relative error guarantees independent of the dimension of the feature space or the total number of data points. In particular, computing $k$ principal components with relative error $\epsilon$ over $s$ workers has communication cost $\tilde{O}(s \nsparse k/\epsilon+s k^2/\epsilon^3)$ words, where $\nsparse$ is the average number of nonzero entries in each data point. Furthermore, we experimented the algorithm with large-scale real world datasets and showed that the algorithm produces a high quality kernel PCA solution while using significantly less communication than alternative approaches. 
\end{abstract}

\section{Introduction} \label{sec:intro}


Kernel Principal Component Analysis (KPCA) is a key machine learning algorithm for extracting nonlinear features from complex datasets, such as image, text, healthcare and biological data~\cite{SchSmoMul97,SchSmo02,SchTsuVer04}. The original kernel PCA algorithm is designed for a batch setting, where all data points need to fit into a single machine. However, nowadays large volumes of data are being collected increasingly in a distributed fashion, which poses new challenges for running kernel PCA. For instance, a large network of distributed sensors can collect temperature readings from geographically distant locations; a system of distributed data centers in an Internet company can process user queries from different countries; a fraud detection system in a bank needs to perform credit checks on people opening accounts from different branches; and a network of electronic healthcare systems can store patient records from different hospitals. It is very costly in terms of network bandwidth and transmission delays to communicate all of the data collected in a distributed fashion to a single data center, and then run kernel PCA on the central node. In other words, communication now becomes the bottleneck to the nonlinear feature extraction pipeline. How can we leverage the aggregated computing power in a large distributed system? Can we perform kernel PCA on the entire dataset in a distributed and communication efficient fashion while maintaining provable and strong guarantees in solution quality?

While recent work shows how to do linear PCA in a communication efficient and distributed fashion~\cite{BouSviWoo15}, the kernel setting is significantly more challenging. The main problem with previous work is that it achieves communication proportional to the dimension of the data points, which if implemented straightforwardly in the kernel setting would give communication proportional to the dimension of the feature space which can be very large or even infinite. Kernel PCA uses the kernel trick to avoid going to the potentially infinite dimensional kernel feature space explicitly, so intermediate results are often represented by a function (\eg, a weighted combination) of the feature mapping of some data points. Communicating such intermediate results requires communicating all the data points they depend on. To lower the communication, the intermediate results should only depend on a small number of data points. A distributed algorithm then needs to be carefully designed to meet this constraint. 

In this paper, we propose a communication efficient algorithm for distributed KPCA in a master-worker setting where
the dataset is arbitrarily partitioned and each portion sits in one worker, and the workers can communicate only through the master. Our key idea  is to design a communication efficient way of generating a small representative subset of the data, and then performing kernel PCA based on this subset. We show that the algorithm can compute a rank-$k$ subspace in the kernel feature space using just a representative subset of size $O(k/\epsilon)$ built in a distributed fashion. For polynomial kernels, it achieves a $(1+\epsilon)$ relative-error approximation to the best rank-$k$ subspace, and for shift-invariant kernels (such as the Gaussian kernel), it achieves $(1+\epsilon)$-approximation with an additive error term that can be made arbitrarily small. In both cases, the total communication for a system of $s$ workers is $\tilde{O}(s \nsparse k/\epsilon+s k^2/\epsilon^3)$ words, where $\nsparse$ is the average number of nonzero entries in each data point, and is always bounded by the dimension of the data $d$ and independent of the dimension of the kernel feature space. This for constant $\epsilon$ nearly matches the lower bound $\Omega(sdk)$ for linear PCA~\cite{BouSviWoo15}.
As far as we know, this is the first algorithm that can achieve provable approximation with such communication bounds. 

As a subroutine of our algorithm, we have also developed an algorithm for the distributed Column Subset Selection (CSS) problem, which can select a set of $O(k/\epsilon)$ points whose span contains $(1+\epsilon)$-approximation, with communication $O(s\nsparse k /\epsilon + s k^2)$. This is the first algorithm that addresses the problem for kernels, and it nearly matches the communication lower bound $\Omega(s\nsparse k /\epsilon)$ for this problem in the linear case~\cite{BouWoo15}. The column subset selection problem has various applications in big data scenarios, so this result could be of independent interest.

Furthermore, our algorithm also leads to some other distributed kernel algorithms: the data can then be projected onto the subspace found and processed by downstream applications. For example, 
an immediate application is for distributed spectral clustering, that first computes KPCA to rank-$k/\epsilon$ and then does $k$-means on the data projected on the subspace found by KPCA (\eg, \cite{DhiGuaKul04}). This can be done by combining our algorithm with any efficient distributed $k$-means algorithms (\eg, \cite{BalKanLiaWoo14}).

We evaluate our algorithm on datasets with millions of data points and hundreds of thousands of dimensions where non-distributed algorithms such as batch KPCA are impractical to run. Furthermore, comparing to other distributed algorithms, our algorithm requires less communication and fewer representation data points to achieve the same approximation error.

{\bf Outline} Section~\ref{sec:relatedwork} reviews related work, and Section~\ref{sec:preliminary} reviews some preliminaries. Section~\ref{sec:intuition} provides an overview. Section~\ref{sec:disKPCA} presents our distributed kernel PCA algorithm, which consists of the key building blocks: kernel subspace embedding, computing (generalized) leverage scores, sampling representative points, and finally computing the solution in the span of the selected data points. Section~\ref{sec:exp} provides empirical results. Due to space limitation, 
the details of some proofs are deferred to the full version~\cite{LiaXieWooSonBal15}.

\section{Related Work}  \label{sec:relatedwork}

There has been a surge of recent work on distributed machine learning, \eg,~\cite{BalBluFinMan12,ZhaWaiDuc12,KanVemWoo14,BalKanLiaWoo14}. 
In this setting, the data sets are typically large, and small error rate is required. This is because if only a coarse error is needed then there is no need to use large-scale data sets; a small subset of the data will be sufficient. Furthermore, one prefers relative error rates instead of additive error rates, since the latter is worse and harder to interpret without knowing the optimum. Our algorithm can achieve small relative error with limited communication. 

Since there exist communication efficient distributed linear PCA algorithms~\cite{BalKanLiaWoo14,KanVemWoo14}, it is tempting to adopt the random feature approach for distributed kernel PCA: first construct $m$ random features and then solve PCA in the primal form, \ie, apply distributed linear PCA on the random features.
However, the communication of this method is too high. One needs $m=\tilde{O}(d/\epsilon^2)$ random features to preserve the kernel values up to additive error $\epsilon$, leading to a communication of $O(skm/\epsilon) = O(skd/\epsilon^3)$.
 Another drawback of using random features is that it only produces a solution in the space spanned by the random features, but not a solution in the feature space of the kernel. 

The Nystr\"om method is another popular tool for large-scale kernel methods: sample a subset of data points uniformly at random, and use them to construct an approximation of the original kernel matrix. However, it also suffers from high communication cost, since one needs $O(1/\epsilon^4)$ sampled points to achieve additive $\epsilon$ error in the Frobenius norm of the kernel matrix~\cite{KumMohTal12}. A closely related method is incomplete Cholesky decomposition~\cite{BacJor05}, where a few pivots are greedily chosen to approximate the kernel matrix. It is unclear how to design a communication efficient distributed version since it requires as many rounds of communication as the number of pivots, which is costly.


Leverage score sampling is a related technique for low-rank approximation~\cite{Woodruff14}. A prior work of Boutsidis et al. \cite{BouSviWoo15} gives the first distributed protocol for column subset selection. \cite{BouWooZho15} gives a distributed PCA algorithm with optimal communication cost, but only for linear PCA.
In comparison, our work is the first communication efficient distributed algorithm for low rank approximation in the kernel space. 

\section{Backgrounds} \label{sec:preliminary}

For any vector $v$, let $\nbr{v}$ denote its Euclidean norm.
For any matrix $M \in \RR^{d\times n}$, let $M_{i:}$ denote its $i$-th row and $M_{:j}$ its $j$-th column. Let $\nbr{M}_F$ denote its Frobenius norm, and $\nbr{M}_2$ denote its spectral norm. 
Let its rank be $r \leq \min\cbr{n,d}$, and denote its SVD as $M = U \Sigma V^\top$ where $U \in \RR^{d\times r}, \Sigma \in \RR^{r\times r}$, and $V \in \RR^{n \times r}$. 
Let $[M]_k$ denote its best rank-$k$ approximation.
Finally, denote its number of non-zero entries as $\nnz(M)$.

In the distributed setting, there are $s$ workers that are connected to a master processor.  Worker $i$ has a local data set $A^i \in \RR^{d \times n_i}$, and the global data set $A \in \RR^{d \times n}$ is the concatenation of the local data ($n = \sum_{i=1}^s n_i$). 

{\bf Kernels and Random Features.} For a kernel $\kappa(x,x')$, let $\Hcal$ denote its feature space, i.e., there exists a feature mapping $\phi(\cdot) \in \Hcal$ such that $\kappa(x,x') = \inner{\phi(x)}{\phi(x')}_\Hcal$. Let $\phi(A) \in \Hcal^n$ denote the matrix obtained by applying $\phi$ on each column of $A$ and concatenating the results. Throughout the paper, we regard any $M \in \Hcal^n$ as a matrix whose columns are elements in $\Hcal$ and define matrix operations accordingly. For example, for any $M \in \Hcal^{n}$ and $N \in \Hcal^{m}$, let $B = M^\top N \in \RR^{n\times m}$ where $B_{ij} = \inner{M_{:i}}{N_{:j}}_\Hcal$, and let $\nbr{M}^2_\Hcal = \tr\rbr{M^\top M}$. When there is no ambiguity, we omit the subscript $\Hcal$.

The random feature approach is a recent technique to scale up kernel methods. Many kernels can be approximated by $\frac{1}{m} \sum_{i=1}^m { \xi_{\omega_i}(x) \xi_{\omega_i}(y) }$ where $\omega_i$'s are randomly sampled. These include Gaussian RBF kernels and other shift-invariant kernels, inner product kernels, etc (\cite{RahRec07,DaiXieHe14}). 
For example, Gaussian RBF kernels, $\kappa(x,y) = \exp(-\|x-y\|^2/2\sigma^2)$, can be approximated by $\frac{1}{m}\sum_{i=1}^m {z_{\omega_i,b_i}(x) z_{\omega_i,b_i}(y)}$ where $z_{\omega,b}(x)  = \sqrt{2}\cos(\omega^\top x + b)$ and $\omega_i$ is from a Gaussian distribution with density proportional to $\exp(-\sigma^2 \nbr{\omega}^2/2)$ and $b_i$ is uniform over $[0,2\pi]$. 

In this paper, we provide guarantees for shift-invariant kernels using Fourier random features (the extension to other kernels/random features is straightforward). 
We assume the kernel satisfies some regularization conditions: it is defined over bounded compact domain in $\RR^d$, with $\kappa(0)\leq 1$ and bounded $\nabla^2 k(0)$~\cite{RahRec07}. Such conditions are standard in practice, and thus we assume them throughout the paper.

{\bf Kernel PCA.} 
An element $u \in \Hcal$ is an eigenfunction of $\phi(A)\phi(A)^\top$ with the corresponding eigenvalue $\lambda$ if $\nbr{u} = 1$ and $\phi(A)\phi(A)^\top u = \lambda u$. Given eigenfunctions $\cbr{u_i}$ of $\phi(A)\phi(A)^\top$ and eigenvectors $\cbr{v_i}$ of $\phi(A)^\top\phi(A)$, $\phi(A)$ has the singular decomposition  $U\Sigma_k V^\top + U_{\perp}\Sigma_{\perp} V_{\perp}^\top$,
where $U$, $V$ are the lists of top $k$ eigenfunctions/vectors, $\Sigma_k$ is a diagonal matrix with the corresponding singular values, $U_{\perp}$, $V_{\perp}$ are the lists of the rest of the eigenfunctions/vectors, and $\Sigma_{\perp}$ is a diagonal matrix with the rest of the singular values. 
Kernel PCA aims to identify the top $k$ subspace $U$, since the best rank-$k$ approximation $[\phi(A)]_k = U\Sigma_k V^\top = UU^\top \phi(A)$.  Typically, the goal is to find a good approximation to this subspace. Formally,
\begin{definition} \label{def:kernelLA}
A subspace $L \in \Hcal^{k}$ is a rank-$k$ $(1+\epsilon, \Delta)$-approximation for kernel PCA on $A$ if $L^\top L = I_k$ and
$$
  \nbr{\phi(A) - L L^\top \phi(A) }^2 \leq (1+\epsilon) \nbr{\phi(A) - \sbr{\phi(A)}_k }^2 + \Delta.
$$
\end{definition}
Note that kernel PCA immediately leads to solutions for some other nonlinear component analysis such as  kernel CCA, and also provides the needed subroutine for tasks like spectral clustering. 

{\bf Subspace Embeddings.}
Subspace embeddings are a useful technique that can improve the computational and space costs by embedding 
data points into lower dimension while preserving interesting properties. Subspace embeddings have been extensively studied in recent years~\cite{Sar06,Achlioptas03,ArrVem99,AilCha09,ClaWoo13}.
The recent fast sparse subspace embeddings~\cite{ClaWoo13} and its optimizations~\cite{MenMah13,NelNgu13} are particularly suitable for large-scale sparse datasets, since their running time is linear in the number of non-zero entries in the data matrix. They also preserve the sparsity of the input data. 
Formally, 
\begin{definition}\label{def:OSE}
An $\epsilon$-subspace embedding of $M \in \RR^{m \times n}$ is a matrix $S\in \RR^{t \times m}$ such that 
for any $x$, 
\[
  \nbr{SM x} = (1\pm \epsilon) \nbr{Mx}.
\]
Subspace embeddings can also be done on the right hand side, \ie, $S \in \RR^{n\times t}$ and $\nbr{x^\top MS} = (1\pm \epsilon) \nbr{x^\top M}$.
\end{definition}

$Mx$ is in the column space of $M$ and $SMx$ is its embedding, so the definition means that the norm of any vector in the column space of $M$ is approximately preserved. This then provides a way to do dimensional reduction for problems depending on inner products of vectors.
Our algorithm repeatedly makes use of subspace embeddings. In particular, the embedding we use is the concatenation of the following known sketching matrices: \CS and i.i.d.\ Gaussians (or the concatenation of \CS, fast Hadamard and i.i.d. Gaussians). The details can be found in~\cite{Woodruff14}; we only need the following fact.

\begin{lemma}\label{fac:embedding}
For $M \in \RR^{d \times n}$, there exist sketching matrices $S \in \RR^{t \times d}$ with $t = O(n/\epsilon^2)$ that are $\epsilon$-subspace embeddings. Furthermore, $SM$ can be successfully computed in time $\tilde{O}(\nnz(M))$ with probability at least $1-\delta$.
\end{lemma}

The work of \cite{AvrNguWoo14} shows that a fast computational approach, \TS, is indeed a subspace embedding for the polynomial kernel. However, there are no previously known subspace embeddings for other kernels. 
We develop efficient and provable embeddings for a large family of kernels including Gaussian kernel and other shift invariant kernels. 
These embeddings will be a key tool used by our algorithm.



\section{Overview} \label{sec:intuition}

\begin{figure*}[t]
\centering
\subfigure[\small{Compress data and compute leverage scores}]{\includegraphics[width=0.38\textwidth]{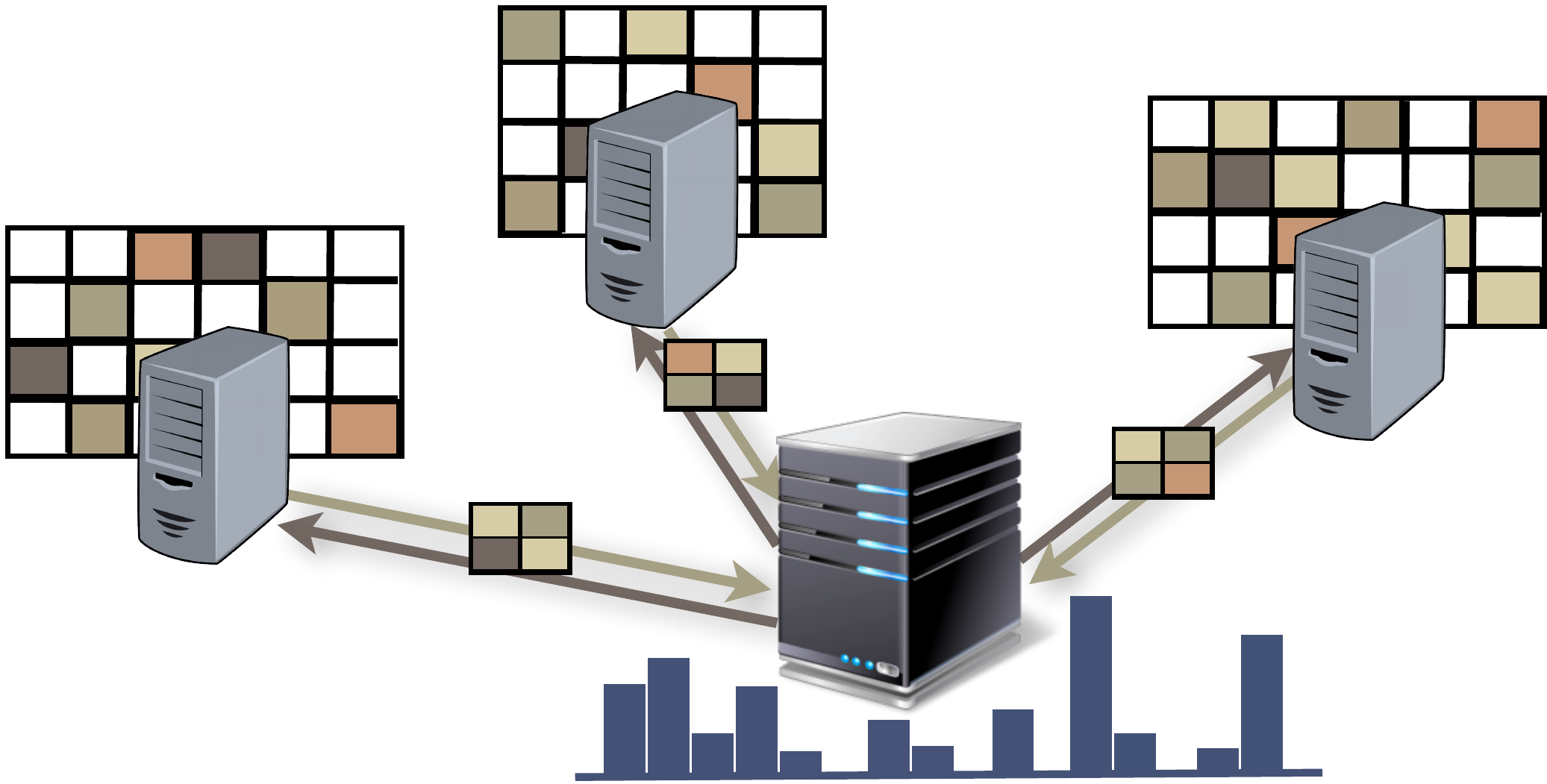}}\hspace{1.2cm}%
\subfigure[\small{Leverage score sampling}]{\includegraphics[width=0.38\textwidth]{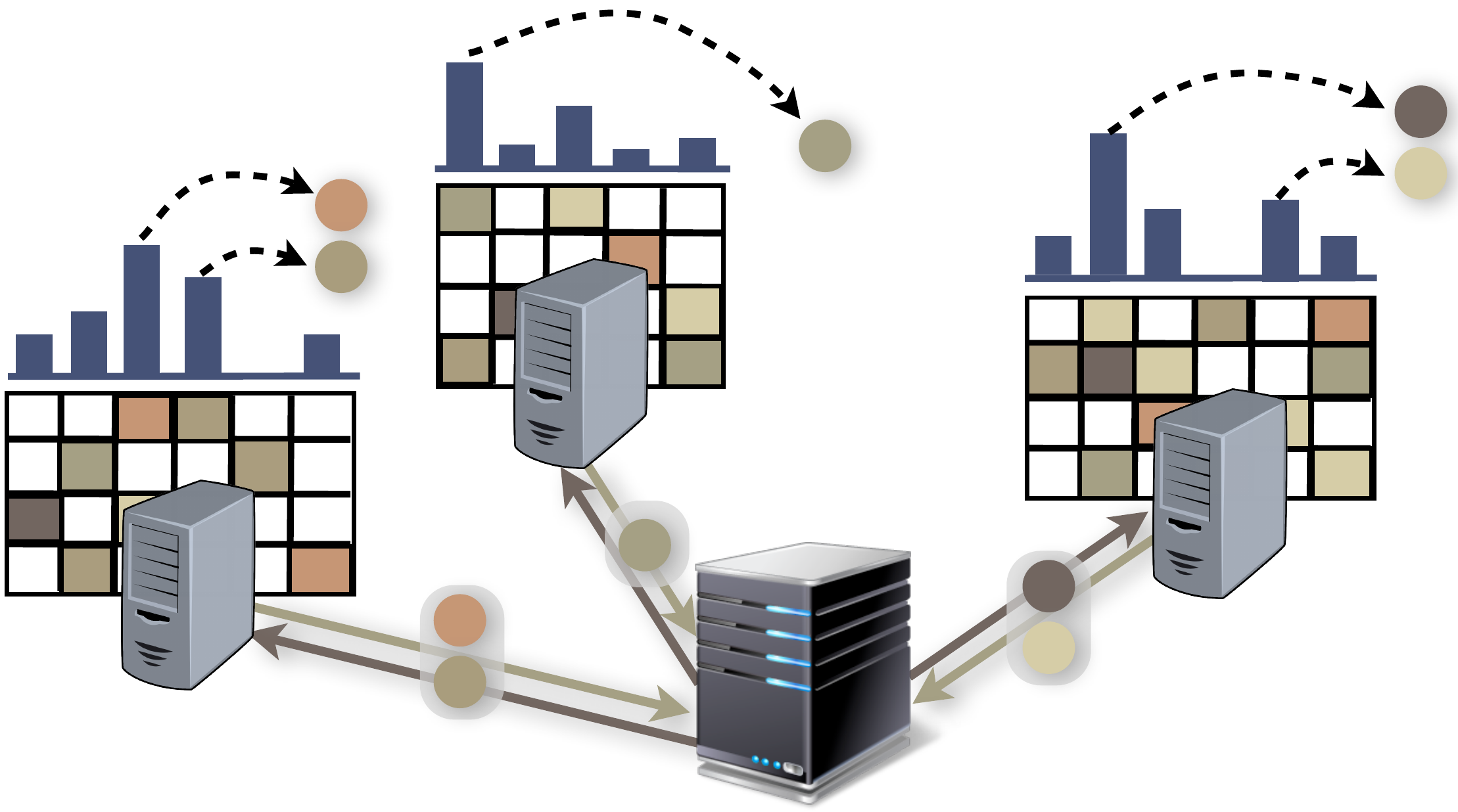}}\vspace{-4mm}
\subfigure[\small{Adaptive sampling}]{\includegraphics[width=0.38\textwidth]{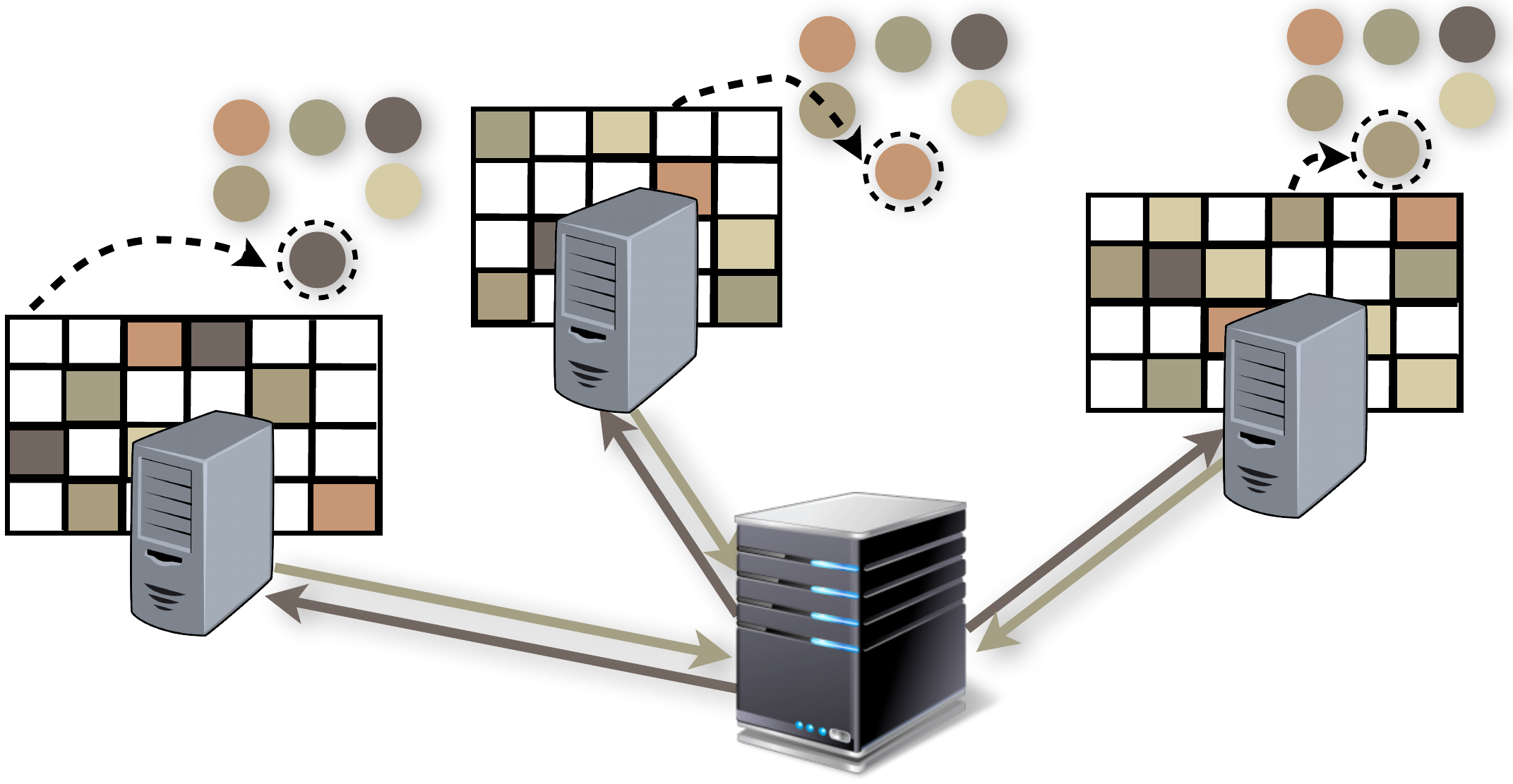}}\hspace{1cm}%
\subfigure[\small{Project data and compute KPCA}]{\includegraphics[width=0.38\textwidth]{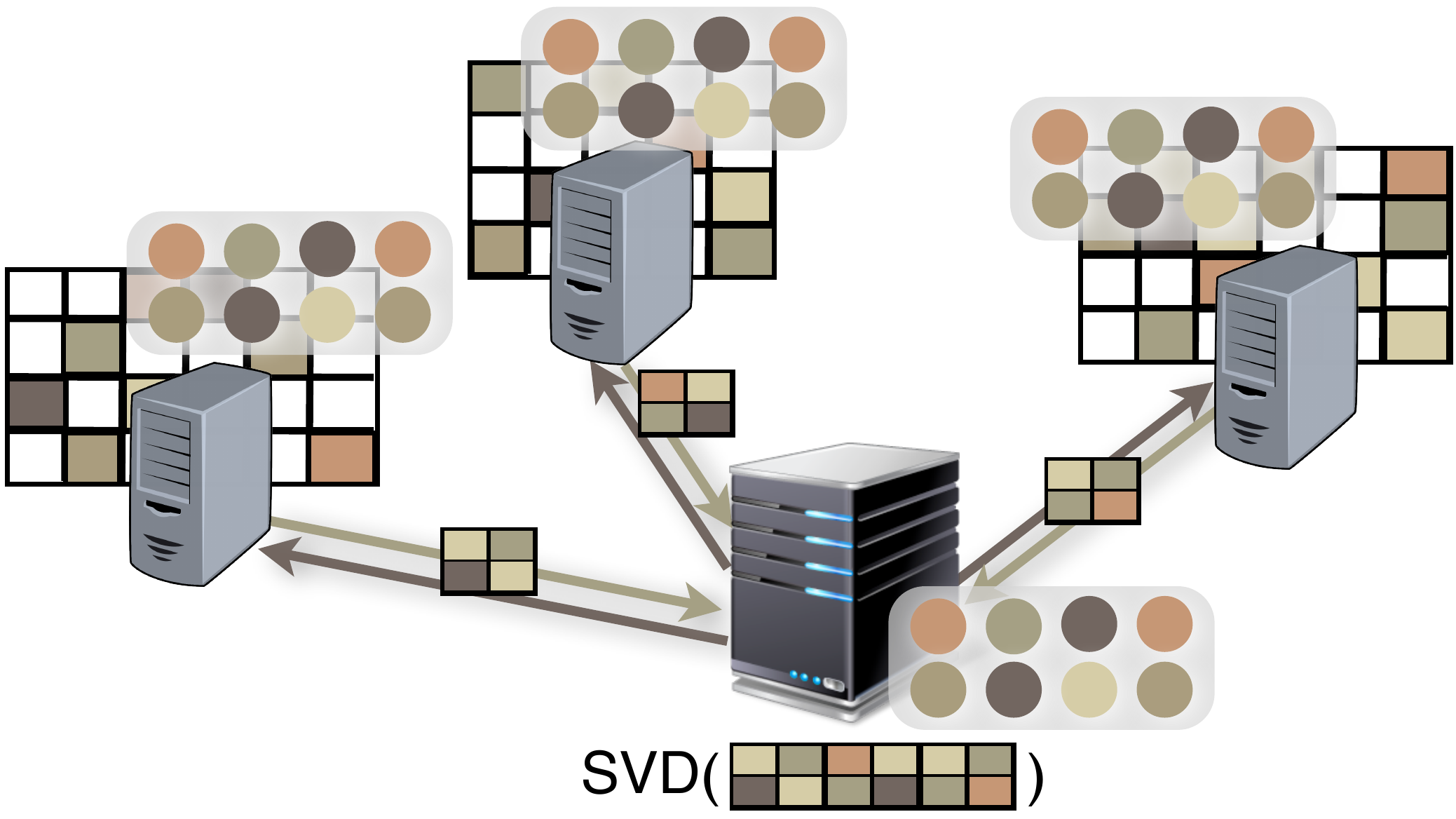}}
\vspace{-4mm}
\caption{\small{Algorithm overview. The black machine at the center is the master and the gray machines are the workers. Each worker stores its portion of the dataset, and the algorithm computes the top $k$ principle components on the whole dataset. The arrows between the machines denote the direction of communications. In each round, the communication always starts from the workers to the master (lighter arrows) and then from the master to the workers (darker arrows).
(a) Each worker compresses its data by using (kernel) subspace embeddings and sends it to the master. The master aggregates the data and computes intermediate results for leverage scores and sends back to the workers.
(b) Each worker computes the leverage scores, samples data points (denoted by circles) and then sends them to the master. The master distributes back the union of the sampled data points.
(c) Each worker conducts adaptive sampling and sends newly sampled points to the master. The master distributes back the union of all sampled points.
(d) Each worker projects its data onto the subspace spanned by the sampled data points and sends the compressed projections to the master. The master computes coefficients for the top $k$ principle components by running SVD, and then sends them back to the workers.
(best viewed in color)
}
}
\label{fig:algorithm_overview}
\end{figure*}

In view of the limitations of the related work, we instead take a different approach, which first selects a small subset of points whose span contains an approximation with relative error rate $\epsilon$, and then find a low rank approximation in their span. It is important to keep the size of the subset small and also guarantee that their span contains a good approximation (this is also called kernel column subset selection). A well known technique is to sample according to the statistical leverage scores.

{\bf Challenges.} However, this immediately raises the following technical challenges. 

{\bf I.} Computing the statistical leverage scores is prohibitively expensive.
Na\"ively computing them requires communicating all data points. There exist non-trivial fast algorithms~\cite{DriMagMahWoo12}, but they are designed for the non-distributed setting. Using them in the distributed setting leads to communication linear in the number of data points, or linear in the number of random features if one uses random features and computes the leverage scores for them. \\
Our key idea is that it is sufficient to compute the (generalized) leverage scores of the data points, i.e., the leverage scores of another matrix whose row space approximates that of the original data matrix. 
So the problem is reduced to designing kernel subspace embeddings that can approximate the row space of the data.

{\bf II.}
Even given the embedded data, it is unclear how to compute its leverage scores in a communication efficient way. Although the dimension of the embedded data is small, existing algorithms will lead to communication linear in the number of data points, which is impractical. 

{\bf III.} Simply sampling according to the generalized leverage scores does not give the desired results: a good approximation can only be obtained using a much larger rank, specifically, $O(k/\epsilon)$.

{\bf IV.} After selecting the small subset of points, we need to design a distributed algorithm to compute a good low rank approximation in their span. 

{\bf Algorithm.} We have designed a distributed kernel PCA algorithm that computes an approximated solution with relative error rate $\epsilon$ using low communication.
The algorithm operates in following key steps, each of which addresses one of the challenges mentioned above~(See Figure~\ref{fig:algorithm_overview}): 

{\bf I.} Kernel Subspace Embeddings. 
To approximate the subspace of the original data matrix, we propose subspace embeddings for a large family of kernels. 
For polynomial kernels we improve the prior work by reducing the embedding dimension and thus lowering the communication. Furthermore, we propose new subspace embeddings for kernels with random feature expansions, allowing PCA for these kernels to be computed in a communication efficient manner.  
See Section~\ref{sec:kernel_subspace} for the details.

{\bf II.} Distributed Leverage Scores.
To compute the leverage scores,  sampling with constant approximations is sufficient. We can thus drastically reduce the number of data points: first do another (non-kernel) subspace embeddings on the embedded data, and then send the result to the master for computing the scores.
See Figure~\ref{fig:algorithm_overview}(a) for an illustration and Section~\ref{sec:disLS} for the details.

{\bf III.} Sampling Representative Points.
We take a two-step approach as leverage scores alone is not good enough :
first sample according to generalized leverage scores, and then sample additional points according to their distances to the span of the points sampled in the first step.  The first step gains some coarse information about the data, and the second step use it to get the desired samples. The two steps are illustrated in Figure~\ref{fig:algorithm_overview}(b) and~\ref{fig:algorithm_overview}(c), respectively, while the details are in Section~\ref{sec:adaptive_sampling}.

{\bf IV.} Computing an Approximation.
After projecting the data to the span of the representative points, 
we sketch the projections by (non-kernel) subspace embeddings. 
We then send the compressed projections to the master and compute the solution there. 
See Figure~\ref{fig:algorithm_overview}(d) for an illustration and Section~\ref{sec:disLR} for the details.

{\bf Main Theoretical Results.} Given as input the local datasets, the rank $k$ and error parameters $\epsilon, \Delta$, our algorithm  outputs a  $(1+\epsilon, \Delta)$-approximation to the optimum with large probability. Formally, 
\begin{theorem} \label{thm:disKPCA}
Algorithm~\ref{alg:disKPCA} produces a subspace $L$ for kernel PCA on $A$ that with probability $\ge 0.99$ satisfies:
\begin{enumerate}[noitemsep,topsep=0pt,parsep=0pt,partopsep=0pt]
\item $L$ is a rank-$k$ $(1+\epsilon, 0)$-approximation when applied to polynomial kernels.
\item $L$ is a rank-$k$ $(1+\epsilon, \Delta)$-approximation when applied to shift-invariant kernels with regularization.
\end{enumerate} 
The total communication is $\tilde{O}(\frac{s\nsparse k}{\epsilon} +  \frac{s k^2}{\epsilon^3})$ words, where $\nsparse$ is the average number of nonzero entries in one data point.
\end{theorem}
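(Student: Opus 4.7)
The plan is to prove correctness and communication stage by stage, mirroring the four algorithmic blocks in the overview, then compose the per-stage guarantees through a chain of relative-error inequalities. Concretely, I would (i) show that the chosen kernel subspace embedding $\Pi$ approximates the row space of $\phi(A)$ well enough that the generalized leverage scores of $\Pi\phi(A)$ constant-factor upper-bound those of $\phi(A)$; (ii) show that leverage-score sampling followed by a single round of adaptive sampling returns $O(k/\epsilon)$ columns $S$ whose span $\operatorname{span}(\phi(S))$ contains a rank-$k$ $(1+\epsilon)$-approximation of $\phi(A)$; and (iii) show that projecting the data onto $\operatorname{span}(\phi(S))$ and then applying a right non-kernel subspace embedding, followed by SVD at the master, produces the final rank-$k$ subspace $L$ with the desired approximation.

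For stage (i), I would invoke \TS for polynomial kernels (giving a clean $(1\pm\epsilon)$ subspace embedding with no additive error), and for shift-invariant kernels I would combine Fourier random features with Lemma~\ref{fac:embedding}; the regularization assumption on the kernel plus the $\epsilon$-net argument of~\cite{RahRec07} gives an approximate subspace embedding of $\phi(A)$ with an additive perturbation that can be absorbed into $\Delta$ by taking enough features. For stage (ii), I would adapt the kernelized analysis of leverage-score sampling: sampling $\tilde{O}(k)$ columns by the generalized scores yields a $O(1)$-approximate column subset, and then one round of adaptive sampling proportional to squared residuals against the current span upgrades this to $(1+\epsilon)$, in the spirit of Deshpande--Vempala. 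Residual norms $\|\phi(x_i)-P\phi(x_i)\|$ are computable using only kernel evaluations against the already-selected columns, which is what allows the adaptive step to be executed at each worker with only the current sample broadcast back.

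For stage (iii), denoting by $P_S$ the projection onto $\operatorname{span}(\phi(S))$, the preceding steps give $\|\phi(A)-P_S\phi(A)\|^2\le (1+\epsilon)\|\phi(A)-[\phi(A)]_k\|^2$, so the best rank-$k$ fit that lies in $\operatorname{span}(\phi(S))$ inherits the $(1+\epsilon)$ bound. Using a right $\epsilon$-subspace embedding $R$ of $P_S\phi(A)$ and the standard sketched-SVD argument, taking the top-$k$ left singular directions of $P_S\phi(A)R$ at the master yields $L$ with the stated guarantee; the additive $\Delta$ for shift-invariant kernels is inherited unchanged from stage (i). For the communication bound I would tally each round: the kernel sketch plus its secondary non-kernel embedding costs $\tilde{O}(s k^2/\epsilon^3)$ words across workers; each of the leverage-score and adaptive sampling rounds ships $\tilde{O}(k/\epsilon)$ representative points of average sparsity $\nsparse$ through the master, contributing $\tilde{O}(s\nsparse k/\epsilon)$; and the final projected sketch is again $O(s k^2/\epsilon^3)$. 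Summing over a constant number of rounds gives $\tilde{O}(s\nsparse k/\epsilon + s k^2/\epsilon^3)$.

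The main obstacle I expect is the shift-invariant case: random Fourier features approximate kernel entries rather than the feature map itself, so the subspace-embedding guarantee for $\phi(A)$ is not exact, and the $\Delta$ perturbation must be threaded cleanly through leverage-score sampling, adaptive sampling, and the final sketched SVD without blowing up into a multiplicative factor. I anticipate the cleanest route is to recast the Fourier-feature matrix as a right subspace embedding of $\phi(A)$ with high probability once the number of features exceeds a bound depending on the effective dimension, and then treat the remaining gap as a bounded Gram-matrix perturbation whose cumulative contribution to the final Frobenius-norm error is at most $\Delta$ by choice of feature count; this step will require the most care to get a tight dependence.
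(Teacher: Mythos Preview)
Your proposal is essentially the paper's own proof: the theorem is obtained by chaining Lemmas~\ref{lem:poly}/\ref{lem:rfkernel} (kernel subspace embedding), Lemma~\ref{lem:leverage} (distributed leverage scores), Lemma~\ref{fac:leverage_sample} and Lemma~\ref{lem:adaptive} (leverage-score then adaptive sampling), and Lemma~\ref{lem:approxLA} (sketched SVD in the span of the sampled columns), with the communication tallied per round exactly as you describe.

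Two small points of alignment. First, the paper does not argue that the generalized leverage scores ``constant-factor upper-bound those of $\phi(A)$''; it never compares them to the true leverage scores of $\phi(A)$ at all. The mechanism is Definition~\ref{def:gen_leverage} plus Lemma~\ref{fac:leverage_sample}: the leverage scores of $E=S\phi(A)$ are used directly, and their usefulness comes from the fact that $E$ approximates the \emph{row space} of $\phi(A)$, not from any pointwise comparison of scores. Second, for the shift-invariant case the paper does not recast random features as a subspace embedding via an effective-dimension argument; it verifies the two abstract conditions \textbf{P1}/\textbf{P2} of Lemma~\ref{lem:embedding_range} by combining the entrywise kernel approximation bound from~\cite{RahRec07} with the approximate-matrix-product property of the outer sketch (Lemma~\ref{lem:rfkernel}). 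The additive $\Delta$ enters once, in \textbf{P2}, and then propagates mechanically through Lemmas~\ref{fac:leverage_sample}, \ref{lem:adaptive}, and \ref{lem:approxLA}; no separate perturbation-threading argument is needed. Your anticipated ``main obstacle'' is therefore lighter than you expect.
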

The constant success probability can be boosted up to any high probability $1-\delta$ by repetition, which adds only an extra $O(\log \frac{1}{\delta})$ term to  communication and computation.

The output subspace $L$ is represented by $\Otil(k/\epsilon)$ sampled points $Y$ from $A$ (\ie, $L = \phi(Y)C$ for some coefficient matrix $C$), so $L$ can be easily communicated and the projection of any point on $L$ can be easily computed by the kernel trick.
The communication has linear dependence on the dimension and the number of workers, and has no dependence on the number of data points, which is crucial for big data scenarios.
Moreover, it does not depend on $\Delta$ (but the computation does), so the additive error can be made arbitrarily small with more computation.

The theorem also holds for other properly regularized kernels with random feature expansions (see \cite{RahRec07,DaiXieHe14} for more such kernels); the extension of our proof is straightforward.

We also make the following contributions: ({\it i}) Subspace embedding techniques for many kernels. ({\it ii}) Distributed algorithm for computing generalized leverage scores with low communication. ({\it iii}) Distributed algorithm for kernel column subset selection. 



\section{Distributed Kernel Principal Component Analysis} \label{sec:disKPCA}

Our algorithm first computes the (generalized) leverage scores that measure the non-uniform structure, then samples the desired subset of points whose span contains a good approximated solution, and finally finds such a solution in the span.

Leverage scores are critical for importance sampling in many fast randomized algorithms. 
The leverage scores are defined as follows.
\begin{definition} \label{def:leverage}
For $E \in \RR^{t \times n}$ with SVD $E = U\Sigma V^\top$, the leverage score $\ell_j$ for its $j$-th column is $\ell_j = \nbr{V_{j:}}^2.$
\end{definition}
Their importance is reflected in the following fact: suppose $E$ has rank at most $k$, and suppose $P$ is a subset of $O(\frac{k \log k}{\epsilon^2} )$ columns obtained by repeatedly sampled from the columns of $E$ according to their leverage scores, then the span of $P$ contains an $(1+\epsilon, 0)$-approximation subspace for $E$ with probability $\ge 0.99$ (see, \eg,~\cite{DriMahMut08}). Here, sampling one column according to the leverage scores $\ell_j$ means to define sampling probabilities $p_j$ such that $p_j \geq \frac{\ell_j}{4\sum_j \ell_j}$ for all $j$, and then pick one column where the $j$-th column is picked with probability $p_j$. 
Note that setting $p_j = \frac{\ell_j}{\sum_j \ell_j}$ is clearly sufficient, but a constant variance of $p_j$ is allowed at the expense of an extra constant factor in the sample size. This means that it is sufficient to compute constant approximations $\tilde{\ell}_j$ for $\ell_j$, and then sample according to $p_j = \frac{\tilde{\ell}_j}{\sum_j \tilde{\ell}_j}$.

However, even computing constant approximations of the leverage scores are non-trivial:  na\"ive approaches require SVD, which is expensive. Actually, SVD  is more expensive than the task of PCA itself.  Even ignoring computation cost, na\"ive SVD is prohibitive in the distributed setting due to its high communication cost.
Fortunately, it turns out that the leverage scores are an over kill for our purpose; it suffices to compute the generalized leverage scores, i.e., the leverage scores of a proxy matrix. 

\begin{definition} \label{def:gen_leverage}
If $E$ has rank $q$ and can approximate the row space of $M$ up to $(1+\epsilon, \Delta)$, \ie, there exists $X$ with
$$\nbr{XE - M}_F \leq (1+\epsilon) \nbr{M - [M]_k}_F + \Delta,$$
then the leverage scores of $E$ are called the generalized leverage scores of $M$ with respect to rank $q$. 
\end{definition}

This generalizes the definition in~\cite{DriMagMahWoo12} by allowing the rank of $E$ to be larger than $k$ and allowing additive error $\Delta$, which are important for our application.
The generalized leverage scores can act as the leverage scores for our purpose in the following sense.

\begin{lemma} \label{fac:leverage_sample}
Let $P$ be $O(\frac{q\log q}{\epsilon^2} )$ columns sampled from $M$ according to their generalized leverage scores w.r.t.\ rank $q$.
Then with probability $\ge 0.99$, the span of $P$ has a rank-$s$ $(1+ 2\epsilon, 2\Delta)$-approximation subspace for $M$.
\end{lemma}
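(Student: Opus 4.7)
\begin{proofsketch}
The plan is to realize the approximating subspace as (an orthonormal basis of) $\text{col}(P)$ itself, and show directly that $\|M - \pi_P M\|_F \le (1+O(\epsilon))\mu$ via a sketch-and-solve regression argument, where $\mu := (1+\epsilon)\|M - [M]_k\|_F + \Delta$ and $\pi_P$ is the orthogonal projection onto $\text{col}(P)$. Throughout, let $V_E \in \mathbb{R}^{n\times q}$ be an orthonormal basis of $\text{rowspan}(E)$, and take as witness $\tilde M := M V_E V_E^\top$, the orthogonal projection of $M$'s rows onto $\text{rowspan}(E)$. This is a valid witness since $\|M - \tilde M\|_F = \min_Y \|YE - M\|_F \le \mu$ by hypothesis, and it has the crucial property that $N := M - \tilde M$ satisfies $NV_E = 0$.

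First, I would argue that the rescaled sampling matrix $S$ is, with probability at least $0.99$, simultaneously an $\epsilon$-subspace embedding for $\text{rowspan}(E)$ and satisfies the approximate-matrix-product property for pairs with at least one factor having its relevant span in $\text{rowspan}(E)$. This follows from the standard matrix-Chernoff analysis of leverage-score sampling of a $q$-dimensional subspace with $r = O(q\log q/\epsilon^2)$ samples, using only that the probabilities dominate a constant multiple of $\ell_j(E)/\sum_i\ell_i(E)$.

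Second, I would exhibit a specific coefficient $Y^\star := (\tilde M S)^+ \tilde M$. Factoring $\tilde M S = (M V_E)(V_E^\top S)$ and applying the reverse-order pseudoinverse identity (valid since $M V_E$ has full column rank generically and $V_E^\top S$ has full row rank by the embedding), we obtain $Y^\star = (V_E^\top S)^+ V_E^\top$ and $\tilde M S Y^\star = \tilde M$. Hence
\[
\|M - \pi_P M\|_F \;\le\; \|M - MSY^\star\|_F \;\le\; \|M - \tilde M\|_F + \|NSY^\star\|_F \;=\; \mu + \|NS (V_E^\top S)^+\|_F,
\]
where the last equality uses $V_E^\top V_E = I_q$ to strip the trailing $V_E^\top$ under the Frobenius norm.

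The main obstacle will be bounding $\|NS(V_E^\top S)^+\|_F$. Writing $(V_E^\top S)^+ = S^\top V_E (V_E^\top SS^\top V_E)^{-1}$, the quantity becomes $\|(NSS^\top V_E)(V_E^\top SS^\top V_E)^{-1}\|_F$, whose second factor is $O(1)$-bounded in spectral norm by the subspace embedding. For the first factor, the approximate-matrix-product property applied to the pair $(N, V_E)$ gives $\|NSS^\top V_E - NV_E\|_F \le O(\epsilon)\|N\|_F$, and the identity $NV_E = 0$ then yields $\|NSS^\top V_E\|_F \le O(\epsilon)\mu$. Putting the pieces together, $\|M - \pi_P M\|_F \le (1+O(\epsilon))\mu$; squaring and using $(1+\epsilon)^2 = 1 + 2\epsilon + O(\epsilon^2)$, together with an elementary $(a+b)^2$ split to absorb the $\epsilon\cdot\Delta$ cross terms into the error budget, produces the claimed $(1+2\epsilon, 2\Delta)$-approximation with $L$ an orthonormal basis of $\text{col}(P)$.
\end{proofsketch}
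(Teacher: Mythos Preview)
Your argument is correct; it is precisely the sketch-and-solve regression proof underlying Theorem~5 of~\cite{DriMahMut08}, which is exactly what the paper invokes (its own proof is a one-line citation to that theorem combined with Definition~\ref{def:gen_leverage}). Two minor cleanups: the detour through $(\tilde M S)^+$ and the ``generically full column rank'' caveat on $MV_E$ are avoidable---define $Y^\star:=(V_E^\top S)^+V_E^\top$ directly, so that $\tilde M S Y^\star=\tilde M$ follows solely from $V_E^\top S$ having full row rank (a consequence of the subspace embedding); and the rank-$q$ subspace witnessed inside $\mathrm{span}(P)$ is $\mathrm{col}(MSY^\star)$, not all of $\mathrm{col}(P)$, whose rank is typically larger than $q$.
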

\begin{proof}
It follows from combining Theorem 5 in~\cite{DriMahMut08} and the definition of the generalized leverage scores.
\end{proof}

Computing the generalized scores with respect to rank $q$ could be much more efficient, since the intrinsic dimension now becomes $q$, which can be much smaller than the ambient dimension (the number of points or the dimension of the feature space). 
However, as noted in the overview, there are still a few technical challenges.
\begin{itemize}\itemsep2pt \parskip1pt \parsep1pt
\item Efficiently find a smaller matrix $E$ that can approximate the row space of the original data. 
\item Compute the leverage scores of $E$ in a communication efficient way. 
\item The approximation solution in the span of $P$ has the same rank as $E$, which is $O(k/\epsilon)$ when we use kernel subspace embedding to obtain $E$. This is not satisfying since our final goal is to compute a rank-$k$ solution. 
\item Find a good approximation in the span of $\phi(Y)$ with low communication. 
\end{itemize}
Our final algorithm consists of four key steps, each of which addresses one of the above challenges. They are elaborated in the following four subsections respectively, and the final subsection presents the overall algorithm.


\subsection{Kernel Subspace Embeddings} \label{sec:kernel_subspace}
Recall that a subspace embedding $S$ for a matrix $M$ is such that $\nbr{SMx} \approx \nbr{Mx}$, \ie, the norm of any vector in the column space of $M$ is approximately preserved.
Subspace embeddings can also be generalized for the feature mapping of kernels, simply by setting $M =\phi(A)$, $S$ a linear mapping from $\Hcal \mapsto \RR^t$ and using the corresponding inner product.
If the data after the kernel subspace embedding is sufficient for solving the problem under consideration, then only $S\phi(A)$ in much lower dimension is needed. This is especially interesting for distributed kernel methods, since directly using the feature mapping or the kernel trick in this setting will lead to high communication cost, while the data after embedding can be much smaller and lead to much lower communication cost.

A sufficient condition for solving many problems (in particular, kernel PCA) is to preserve the low rank structure of the data. More precisely, the row space of  $S\phi(A)$ is a good approximation to that of $\phi(A)$, where the error is comparably to the best rank $k$ approximation error. Then $S\phi(A)$ can be used to compute the generalized leverage scores for $\phi(A)$, which can then be utilized to compute kernel PCA as mentioned above.


More precisely, we would like $S\phi(A)$ to approximate the row space of $\phi(A)$ up to $(1+\epsilon, \Delta)$, as required in the definition of the generalized leverage scores. We give such embeddings a particular name.

\begin{definition}\label{def:kernelOSE}
$S$ is called a $(1+\epsilon,\Delta)$-good subspace embedding for $\phi(A) \in \Hcal^n$, if there exists $X$ such that
$$
  \nbr{X (S\phi(A)) - \phi(A)}^2 \leq (1+\epsilon) \nbr{\phi(A) - [\phi(A)]_k}^2 + \Delta.
$$
\end{definition}

We now identify the sufficient conditions for  $(1+\epsilon,\Delta)$-good subspace  embeddings, which can then be used in constructing such embeddings for various kernels.

\begin{lemma}\label{lem:embedding_range}
$S$ is a $(1+\epsilon,\Delta)$-good subspace embedding for $\phi(A) \in \Hcal^n$ if it satisfies the following.
\begin{itemize}[noitemsep,nolistsep]
\item[\textbf{P1}] (Subspace Embedding): For any orthonormal $V \in \Hcal^k$ (\ie, $V^\top V$ is the identity), for all $x \in \RR^k$, $$
  \nbr{SV x} = (1\pm c)\nbr{Vx}
$$ 
where $c$ is a sufficiently small constant.
\item[\textbf{P2}] (Approximate Product): for any $M \in \Hcal^n, N \in \Hcal^k$, 
$$
  \nbr{(SN)^\top (SM) - N^\top M}_F^2 \leq {\frac \epsilon  k} \nbr{N}^2 \nbr{M}^2 + \Delta.
$$
\end{itemize}
\end{lemma}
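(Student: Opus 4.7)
The plan is to exhibit an explicit witness $X$ and verify the error bound by a sketched-regression style argument, adapted to the feature space $\Hcal$. Let $\phi(A) = U\Sigma_k V^\top + U_\perp \Sigma_\perp V_\perp^\top$ be the SVD and write $A_\perp := \phi(A) - [\phi(A)]_k = U_\perp \Sigma_\perp V_\perp^\top$. The top-$k$ left singular functions $U \in \Hcal^k$ are orthonormal and their span is orthogonal to that of $A_\perp$, so $U^\top A_\perp = 0 \in \RR^{k \times n}$ and $\nbr{U}^2 = \tr(U^\top U) = k$. I would take the candidate
\[
  X^\ast := U(SU)^+.
\]
By P1 applied to the orthonormal $U$, every singular value of $SU \in \RR^{t \times k}$ lies in $[1-c, 1+c]$; in particular $SU$ has full column rank, $(SU)^+(SU) = I_k$, and $\nbr{((SU)^\top SU)^{-1}}_2 \le (1-c)^{-2}$.

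Substituting the SVD split into $X^\ast S\phi(A)$ and using $(SU)^+(SU) = I_k$ gives
\[
  X^\ast S\phi(A) - \phi(A) = U(SU)^+ S A_\perp - A_\perp.
\]
The two summands on the right live in orthogonal subspaces of $\Hcal$ (the first in $\mathrm{span}(U)$, the second in $\mathrm{span}(U_\perp)$), so by Pythagoras,
\[
  \nbr{X^\ast S\phi(A) - \phi(A)}^2 = \nbr{U(SU)^+ S A_\perp}^2 + \nbr{A_\perp}^2 = \nbr{(SU)^+ S A_\perp}_F^2 + \nbr{A_\perp}^2,
\]
where in the last step $U^\top U = I_k$ lets me strip $U$. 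To bound the remaining term I would factor $(SU)^+ = ((SU)^\top SU)^{-1}(SU)^\top$, use the P1 bound on the inverse, and then apply P2 with $N = U$, $M = A_\perp$: since $U^\top A_\perp = 0$ and $\nbr{U}^2 = k$, the approximate product bound collapses to $\nbr{(SU)^\top (SA_\perp)}_F^2 \le \epsilon \nbr{A_\perp}^2 + \Delta$. Combining these yields $\nbr{(SU)^+ S A_\perp}_F^2 \le (1-c)^{-4}\bigl(\epsilon \nbr{A_\perp}^2 + \Delta\bigr)$.

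Assembling the pieces gives $\nbr{X^\ast S\phi(A) - \phi(A)}^2 \le \bigl(1 + (1-c)^{-4}\epsilon\bigr)\nbr{A_\perp}^2 + (1-c)^{-4}\Delta$; choosing $c$ a sufficiently small absolute constant (which is exactly what P1 permits) absorbs the $(1-c)^{-4}$ slack into a constant-factor rescaling of $\epsilon$ and $\Delta$, producing the claimed $(1+\epsilon)\nbr{\phi(A)-[\phi(A)]_k}^2 + \Delta$ bound. The one nontrivial ingredient is recognizing that P1 and P2 must be chained through the exact orthogonality $U^\top A_\perp = 0$: this is what forces P2 to produce an error proportional to $\nbr{A_\perp}^2$ rather than $\nbr{\phi(A)}^2$, turning the crude approximate-product guarantee into a relative-error bound. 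Everything else is bookkeeping; since $\phi(A)$ has rank at most $n$, all operations take place in a finite-dimensional subspace of $\Hcal$ where the standard SVD and pseudoinverse manipulations apply verbatim.
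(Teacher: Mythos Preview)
Your proof is correct and follows essentially the same approach as the paper: both exhibit the witness $U(SU)^{+}$ (the paper writes it equivalently as $\phi_k(S\phi_k)^{\dagger}$), split the error via Pythagoras into $\nbr{A_\perp}^2$ plus a cross term, bound the cross term by applying \textbf{P2} to the pair $(U,A_\perp)$ exploiting $U^\top A_\perp=0$, and control the pseudoinverse factor via \textbf{P1}. Your presentation is a bit more direct than the paper's, which routes the same computation through a least-squares characterization $\tilde{X}=\argmin_X\nbr{S\phi_k X-S\phi}_F$ and a $T_1/T_2$ decomposition using the normal equations, but the mathematical content is identical.
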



{\bf Polynomial Kernels.} 
For polynomial kernels, there exists an efficient algorithm \TS to compute the embedding~\cite{AvrNguWoo14}. However, the embedding dimension has a quadratic dependence on the rank $k$, which will increase the communication. Fortunately, subspace embedding can be concatenated, so we can further apply another known subspace embedding such as one of those in Lemma~\ref{fac:embedding} which, though not fast for feature mapping, is fast for the already embedded data and has lower dimension. In this way, we can enjoy the benefits of both approaches. 

The guarantee of \TS in~\cite{AvrNguWoo14} and the property of the subspace embeddings in Lemma~\ref{fac:embedding} can be combined to verify \textbf{P1} and \textbf{P2}. So we have
\begin{lemma} \label{lem:poly}
For polynomial kernels $\kappa(x,y) = (\inner{x}{y})^q$, there exists an $(1+\epsilon,0)$-good subspace embedding matrix $S: \RR^{d^q} \mapsto \RR^t$ with $t = O(k/\epsilon)$. 
\end{lemma}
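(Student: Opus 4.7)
The plan is to construct the embedding as a composition $S = S'T$, where $T: \RR^{d^q} \to \RR^{t_1}$ is the \TS of \cite{AvrNguWoo14} and $S': \RR^{t_1} \to \RR^{t}$ is one of the sketching matrices from Lemma~\ref{fac:embedding}. The output dimension will be $t = O(k/\epsilon)$. \TS alone achieves the requisite approximation properties, but only with an output dimension that is quadratic in $k/\epsilon$; composing with $S'$ compresses the intermediate representation down to the target linear dimension $O(k/\epsilon)$ without losing either the subspace embedding or the approximate-matrix-product (AMM) quality.

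First pick $t_1 = \mathrm{poly}(k, 1/\epsilon)$ large enough so that the guarantees of \cite{AvrNguWoo14} yield, with constant probability, both (i) a constant-distortion subspace embedding of any fixed $k$-dimensional subspace of $\RR^{d^q}$ and (ii) the AMM bound $\nbr{(TN)^\top (TM) - N^\top M}_F^2 \leq \tfrac{\epsilon}{4k}\,\nbr{N}_F^2\,\nbr{M}_F^2$ for all matrices $N,M$. Then, by Lemma~\ref{fac:embedding}, taking $t = O(k/\epsilon)$ makes $S'$ both a constant-distortion subspace embedding of every $k$-dimensional subspace of $\RR^{t_1}$ and, by standard analysis of the same class of sketches, an AMM with error at most $\tfrac{\epsilon}{4k}\,\nbr{A}_F^2\,\nbr{B}_F^2$ for any compatible $A,B$.

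Property \textbf{P1} of Lemma~\ref{lem:embedding_range} follows immediately: for any orthonormal $V \in \Hcal^k$, the columns of $TV$ span a $k$-dimensional subspace of $\RR^{t_1}$, so $\nbr{S'TVx} = (1\pm c_2)\,\nbr{TVx} = (1\pm c)\,\nbr{Vx}$ for some small constant $c$. For \textbf{P2}, apply the triangle inequality
\[
\nbr{(SN)^\top(SM) - N^\top M}_F \leq \nbr{(S'TN)^\top(S'TM) - (TN)^\top(TM)}_F + \nbr{(TN)^\top(TM) - N^\top M}_F,
\]
bound the second summand directly by property (ii) of $T$, and bound the first by the AMM of $S'$ applied to $TN, TM$, using that setting $A = B$ in the AMM of $T$ yields $\nbr{TN}_F \leq O(\nbr{N}_F)$ and $\nbr{TM}_F \leq O(\nbr{M}_F)$. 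Squaring and combining yields \textbf{P2} with error $\epsilon/k$ and $\Delta = 0$.

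The main obstacle is the parameter bookkeeping: we need to ensure that the subspace-embedding and AMM properties of \TS hold simultaneously for a single choice of $t_1$, and that one draw of $S'$ simultaneously satisfies both properties needed for \textbf{P1} and \textbf{P2}. Since Lemma~\ref{fac:embedding} supplies oblivious sketches (the \CS-plus-Gaussian or \CS-Hadamard-Gaussian compositions) that satisfy both guarantees at dimension $O(k/\epsilon)$, a single draw of $S'$ suffices, giving the claimed $t = O(k/\epsilon)$.
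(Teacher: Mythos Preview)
Your proposal is correct and follows essentially the same approach as the paper: compose \TS with a second dimension-reducing sketch (the paper writes $S=\Omega T$ with $\Omega$ an i.i.d.\ Gaussian matrix), verify \textbf{P1} by composing the two subspace-embedding guarantees, and verify \textbf{P2} via the same triangle-inequality split into the AMM of the outer sketch on $(TN,TM)$ plus the AMM of $T$ on $(N,M)$. The only cosmetic difference is how the intermediate bound $\nbr{TM}_F,\nbr{TN}_F = O(\nbr{M}_F),O(\nbr{N}_F)$ is justified---you appeal to AMM with $A=B$, while the paper appeals to ``the subspace embedding property''---but both are at the same level of informality and both are valid for the sketches in question.
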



{\bf Kernels with Random Feature Expansions.} 
Polynomial kernels have finite dimensional feature mappings, for which the sketching seems natural. It turns out that it is possible to extend subspace embeddings to kernels with infinite dimensional feature mappings. More precisely, we propose subspace embeddings for kernels with random feature expansions, \ie, $\kappa(x,y) = \EE_\omega\sbr{ \xi_\omega(x) \xi_\omega(y) }$ for some function $\xi(\cdot)$. 
Therefore, one can approximate the kernel by using $m$ features $z_\omega(x)$ on randomly sampled $\omega$.  Such random feature expansion can be exploited for subspace embeddings: view the expansion as the ``new'' data points and apply a sketching matrix on top of it. Compared to polynomial kernels, the finite random feature expansion leads to an additional additive error term. 
Our analysis shows that bounding the additive error term only requires sufficiently large sampled size $m$, which affects the computation but does not affect the final embedding dimension and thus the communication.

In summary, the embedding is $S\phi(x) = T R(\phi(x))$, where $R(\phi(x))\in \RR^m$ is $m$ random features for $x$ and $T \in \RR^{t \times m}$ is an embedding as in Lemma~\ref{fac:embedding}. The properties \textbf{P1} and \textbf{P2} can be verified by combining Lemma~\ref{fac:embedding} and the guarantees of random features.
\begin{lemma} \label{lem:rfkernel}
For a continuous shift-invariant kernels $\kappa(x,y) = \kappa(x-y)$ with regularization, there exists an $(1+\epsilon,\Delta)$-good subspace embedding $S: \Hcal \mapsto \RR^t$ with $t = O(k/\epsilon)$.  
\end{lemma}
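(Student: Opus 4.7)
The plan is to construct $S$ as a two-stage map $S\phi(x) = T \, R(\phi(x))$, where $R:\Hcal \to \RR^m$ is the random Fourier feature map (with entries $\frac{1}{\sqrt{m}}z_{\omega_i,b_i}(x)$ as described in Section~\ref{sec:preliminary}) extended linearly to all of $\Hcal$, and $T \in \RR^{t \times m}$ with $t = O(k/\epsilon)$ is an ordinary (finite-dimensional) sketching matrix as provided by Lemma~\ref{fac:embedding}. The parameter $m$ — which controls the approximation quality of random features — will be set only as a function of $k, \epsilon, \Delta$, so it influences computation but not the final embedding dimension $t$. By Lemma~\ref{lem:embedding_range}, it then suffices to verify properties \textbf{P1} and \textbf{P2}.

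For \textbf{P1}, I fix an orthonormal $V \in \Hcal^k$ and note the key identity $\EE_\omega[R(u)^\top R(v)] = \inner{u}{v}_\Hcal$. A uniform convergence argument in the Rahimi--Recht style — using the kernel regularization ($\kappa(0)\le 1$, bounded $\nabla^2\kappa(0)$, bounded domain), covering numbers on the unit ball of $\mathrm{Col}(V)$, and a standard $\epsilon$-net plus Lipschitz continuity — shows that for $m$ sufficiently large, $\nbr{R(V)x}^2 = \nbr{Vx}^2 \pm \Delta_1$ uniformly over unit $x \in \RR^k$, where $\Delta_1$ can be driven to any constant by increasing $m$. Then $R(V) \in \RR^{m \times k}$ has rank at most $k$, so applying Lemma~\ref{fac:embedding} to its column space (which requires only $t = O(k)$ dimensions) gives $\nbr{TR(V)x} = (1 \pm c')\nbr{R(V)x}$. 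Composing and choosing $\Delta_1, c'$ small enough relative to the target constant $c$ yields \textbf{P1}.

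For \textbf{P2}, given $M \in \Hcal^n$ and $N \in \Hcal^k$, I split the error via a triangle inequality:
\begin{equation*}
\nbr{(SN)^\top(SM) - N^\top M}_F \le \nbr{(TR(N))^\top(TR(M)) - R(N)^\top R(M)}_F + \nbr{R(N)^\top R(M) - N^\top M}_F.
\end{equation*}
The first term is controlled by the approximate matrix product property of $T$ with parameter $\sqrt{\epsilon/k}$, which holds for $t = O(k/\epsilon)$, giving an error of at most $\sqrt{\epsilon/k}\,\nbr{R(N)}_F\nbr{R(M)}_F$; combined with the P1-type bound $\nbr{R(\cdot)}_F \le (1+o(1))\nbr{\cdot}_F$, this contributes the $\frac{\epsilon}{k}\nbr{N}^2\nbr{M}^2$ term. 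The second term is the random-feature approximation error: using $\EE_\omega[R(u)^\top R(v)] = \inner{u}{v}$ plus a variance and union bound over the $n \cdot k$ entries (again leveraging the kernel regularization), one shows this is at most $\sqrt{\Delta/2}$ for $m$ large enough as a function of $n, k, \Delta$. Squaring and combining yields \textbf{P2}.

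The main obstacle is the uniform convergence for random features in the second term of P2 and throughout P1: unlike the original Rahimi--Recht bound, which concerns approximation of $\kappa(x,y)$ pointwise over input pairs, here I need uniform approximation over all unit vectors in infinite-dimensional $\Hcal^k$ subspaces. The resolution is to parametrize these subspaces by their coefficient vectors against the data, bound the Lipschitz constant of the relevant quadratic form in $\omega$ using the bounded-domain and bounded-$\nabla^2\kappa(0)$ assumptions, and invoke a net argument; the resulting $m$ blows up polynomially in $k, n, 1/\Delta$, but this is harmless because $m$ does not appear in the final embedding dimension $t = O(k/\epsilon)$ and thus does not affect the communication bound in Theorem~\ref{thm:disKPCA}.
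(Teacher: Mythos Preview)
Your proposal is correct and follows essentially the same approach as the paper: construct $S = T \circ R$ with $R$ the random Fourier feature map and $T$ a finite-dimensional sketch of dimension $t=O(k/\epsilon)$, then verify \textbf{P1} and \textbf{P2} and invoke Lemma~\ref{lem:embedding_range}; for \textbf{P2} both you and the paper use the same triangle-inequality split, bounding the $T$-part by the approximate-product property of the sketch and the $R$-part by the entrywise Rahimi--Recht bound with a union bound over the $nk$ entries. The only tactical difference is in \textbf{P1}: you run a separate $\epsilon$-net/uniform-convergence argument over unit $x\in\RR^k$, whereas the paper simply specializes the approximate-product bound already established for \textbf{P2} to the case $M=N=V$ to get $\nbr{S(V)^\top S(V)-I}_F$ small, which is quicker. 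Your explicit handling of the fact that $V,M,N$ are general elements of $\Hcal^k$ (not just $\phi$ of data points) via parametrization by coefficients against the data is a point the paper's proof leaves implicit.
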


\subsection{Computing Leverage Scores} \label{sec:disLS}

\begin{algorithm}[!t]
\caption{Distributed Leverage Scores: $\{\tilde{\ell}^{i}_j\}=\textbf{disLS}(\cbr{E^i}_{i=1}^s, k)$} \label{alg:disLS}
  \begin{algorithmic}[1]
						\STATE Each worker $i$:  do $\frac{1}{4}$-subspace embedding $E^i T^i \in \RR^{t \times p}$ with $p=O(t)$; 
						send $E^i T^i$ to \CP. 
			\STATE \CP: QR-factorize $\sbr{E^1 T^1, \dots, E^s T^s}^\top = UZ$; 
			send $Z$ to all workers.
      \STATE Each worker $i$:  compute $\tilde{\ell}^{i}_j = \nbr{\rbr{(Z^\top)^{-1} E^i }_{:j} }^2_2$.
  \end{algorithmic}
\end{algorithm}

Given the matrix $E$ obtained from kernel subspace embedding, we would like to compute the leverage scores of $E$. First note that this cannot be done simply in a local manner: the leverage score of a column in $E^i$ is different from the leverage score of the same column in $E$. Furthermore, though data in $E$ have low dimension, communicating all points in $E$ to the master is still impractical, since it leads to communication linear in the total number of points. 

Fortunately, we only need to compute constant approximations of the scores, which allows us to use subspace embedding on $E$ to greatly reduce the number of data points. In particular, we apply a $\frac{1}{4}$-subspace embedding $T^i$ (\eg, one of those in Lemma~\ref{fac:embedding}) on each local data set $E^i$, and then send them to the master. Let $ET$ denote all the embedded data, and do QR factorization $(ET)^\top = UZ$. Now, the rows of $U^\top = \rbr{Z^\top}^{-1} ET$ are a set of basis for $ET$. Then, think of $U^\top T^\dagger = \rbr{Z^\top}^{-1} E$ as the basis for $E$, so it suffices to compute the norms of the columns in $\rbr{Z^\top}^{-1} E$.

The details are described in Algorithm~\ref{alg:disLS} and Figure~\ref{fig:algorithm_overview}(a) shows an illustration. 
The algorithm is guaranteed to output constant approximations of the leverage scores of $E$.

\begin{lemma} \label{lem:leverage}
Let $\ell^{i}_j$ be the true leverage scores of $E$. Then Algorithm~\ref{alg:disLS} outputs
$\tilde{\ell}^{i}_j = (1\pm 1/2) \ell^{i}_j$.
\end{lemma}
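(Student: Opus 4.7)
The plan is to view the per-worker sketches as a single block-diagonal subspace embedding applied on the right of $E$, and then show that the algorithm's output is the diagonal of a PSD-approximation of the leverage-score projector of $E$. Throughout, assume $E$ has rank $t$ (otherwise restrict attention to its column space, on which the sketch also lives).

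First, I would verify that the combined sketch $T := \diag(T^1, \ldots, T^s) \in \RR^{n \times sp}$ is a right subspace embedding of the concatenated matrix $E = [E^1 \mid \cdots \mid E^s]$. Because $T$ is block diagonal, for any $x \in \RR^t$,
\[
  \nbr{x^\top ET}^2 = \sum_{i=1}^s \nbr{x^\top E^i T^i}^2 = (1 \pm 1/4)^2 \sum_{i=1}^s \nbr{x^\top E^i}^2 = (1 \pm c)\,\nbr{x^\top E}^2,
\]
for a suitable constant $c$, using that each $T^i$ is a $1/4$-subspace embedding of $E^i$ in the sense of Lemma~\ref{fac:embedding} (each succeeds with constant probability, and a union bound over the $s$ workers handles the simultaneous event). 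In PSD language, $ETT^\top E^\top = (1 \pm c)\,EE^\top$.

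Second, I would rewrite the algorithm's output in closed form. The QR step $(ET)^\top = UZ$ with $U$ having orthonormal columns gives $(ET)(ET)^\top = Z^\top Z$, so setting $W := (Z^\top)^{-1} E$ yields
\[
  W^\top W = E^\top (Z^\top Z)^{-1} E = E^\top (ETT^\top E^\top)^{-1} E,
\]
and hence the algorithm's output for the $j$-th column of $E^i$ is precisely the diagonal entry $(W^\top W)_{jj}$. Comparing this with the leverage-score projector $V_E V_E^\top = E^\top (EE^\top)^{-1} E$, whose $j$-th diagonal entry is exactly the true leverage score $\ell^i_j$, inverting the PSD relation from the first step gives $(ETT^\top E^\top)^{-1} = (1 \pm c')(EE^\top)^{-1}$ with $c' = c/(1-c) = O(c)$; sandwiching by $E^\top$ and $E$ preserves the PSD approximation, so $W^\top W = (1 \pm c')\,V_E V_E^\top$, and reading off diagonals gives $\tilde\ell^{i}_j = (1 \pm c')\,\ell^{i}_j$. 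Choosing the per-worker embedding parameter to be a sufficiently small constant (still $p = O(t)$ by Lemma~\ref{fac:embedding}) forces $c' \le 1/2$.

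The main obstacle is bookkeeping the constants through squaring of norms and PSD inversion: each step inflates the relative error multiplicatively, so the nominal $1/4$ per-worker accuracy may need to be sharpened to a smaller absolute constant to land strictly inside the $(1 \pm 1/2)$ window of the lemma. A secondary technical point is the rank check---one must verify that the subspace-embedding property of $T$ preserves the column space of $E$ so that both $EE^\top$ and $ETT^\top E^\top$ are invertible on that space (or interpret the inverses as pseudoinverses restricted to it). Aside from these, the argument is a routine chain: block-diagonal preserves the sketch, QR gives $Z^\top Z$, PSD inversion transfers the approximation, and reading off the diagonal delivers the claim.
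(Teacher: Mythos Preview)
Your proposal is correct and follows essentially the same line as the paper: establish that the block-diagonal $T=\diag(T^1,\dots,T^s)$ is a constant-factor subspace embedding for $E$, then conclude the leverage-score approximation. The only difference is that the paper invokes Lemma~6 of~\cite{DriMagMahWoo12} for the second step, whereas you spell it out via the PSD sandwich $E^\top(ETT^\top E^\top)^{-1}E \approx E^\top(EE^\top)^{-1}E = V_EV_E^\top$; your caveat that the nominal $1/4$ per-worker accuracy may need to be sharpened slightly to land inside $(1\pm 1/2)$ is well taken, and the paper does not track these constants explicitly either.
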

\begin{proof}
The algorithm can be viewed as applying an embedding $T = \diag\rbr{T^1, \dots, T^s}$ on $E$ to approximate the scores while saving the costs.  Each $T^i$ is an $\frac{1}{4}$-subspace embedding matrix, then for any $x$,
\begin{align*}
 \nbr{x^\top E T}^2 
= & \nbr{[x^\top E^1 T^1, x^\top E^2 T^2, \dots, x^\top E^s T^s]}^2 = \sum_{i=1}^s \nbr{x^\top E^i T^i}^2  = \sum_{i=1}^s (1\pm 1/4)^2 \nbr{x^\top E^i}^2 \\
= & (1\pm 1/4)^2 \nbr{x^\top E}^2.
\end{align*}
So $T$ is also $\frac{1}{4}$-subspace embedding.
Such a scheme of using embedding for approximating the scores has been analyzed (Lemma 6 in~\cite{DriMagMahWoo12}), and the lemma follows. 
\end{proof}

We note that though a constant approximation is sufficient for our purpose, but the algorithm can output $\tilde{\ell}^{i}_j = (1\pm \epsilon) \ell^{i}_j$ by doing an $\frac{\epsilon}{2}$-subspace embedding (instead of $\frac{1}{4}$), which can be useful for other applications.

\subsection{Sampling Representative Points} \label{sec:adaptive_sampling}

\begin{algorithm}[!t]
\caption{Sampling Representative Points: $Y=\textbf{RepSample}(\cbr{A^i}_{i=1}^s, \{\tilde{\ell}^{i}_j\}, k, \epsilon)$} \label{alg:adaptive}
  \begin{algorithmic}[1]
			\STATE Workers:  sample $O(k \log k)$ points according to $\{\tilde{\ell}^{i}_j \}$; 
			send to \CP;
			\STATE  \CP: send all the sampled points $P$ to the workers;\\
       \STATE Workers:  sample $O(k/\epsilon)$ points $\Ytil$ according to the square distances to $P$ in the feature space; 
			send to \CP;
			\STATE \CP: send $Y=\Ytil\cup P$ to all the workers.
  \end{algorithmic}
\end{algorithm}

Sampling directly to the leverage scores can produce a set of points $P$ such that the span of $\phi(P)$ contains 
a $(1+\epsilon, \Delta)$-approximation to $\phi(A)$. However, the rank of that approximation can be as high as $O(k/\epsilon)$, since its rank is the same as that of the embedded data (see Lemma~\ref{fac:leverage_sample}), which will be $O(k/\epsilon)$ to achieve $\epsilon$ error. 
To get a rank-$k$ approximation and also enjoy the advantage of leverage scores, we propose to combine leverage score sampling and the adaptive sampling algorithm in~\cite{DesVem2006,BouWoo14}. 


The details are presented in Algorithm~\ref{alg:adaptive}. We first sample a set $P$ of $O(k\log k)$ points according to the leverage scores, so that the span of $\phi(P)$ contains a $(2,\Delta)$-approximation. Then we use the adaptive sampling method: sample $O(k/\epsilon)$ points according to the square distances from the points to their projections on $P$ and then add them to $P$ to get the desire set $Y$ of representative points. 
Figure~\ref{fig:algorithm_overview}(b) and~\ref{fig:algorithm_overview}(c) demonstrate the two steps of the algorithm. 

Adaptive sampling has the following guarantee:

\begin{lemma} \label{lem:adaptive}
Suppose there is a $(2, \Delta)$-approximation for $\phi(A)$ in the span of $\phi(P)$.
Then with probability $\ge 0.99$, the span of $\phi(Y)$ has a rank-$k$ $(1+\epsilon, \Delta)$-approximation.
\end{lemma}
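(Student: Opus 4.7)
The plan is to lift the Deshpande--Vempala adaptive sampling analysis \cite{DesVem2006} (together with its relative-error refinement in \cite{BouWoo14}) to the feature space $\Hcal$, and then carefully propagate the additive error term. The whole adaptive sampling procedure is expressible purely in terms of inner products $\inner{\phi(a_i)}{\phi(a_j)} = \kappa(a_i,a_j)$: the sampling probabilities are proportional to the squared residuals $\nbr{\phi(a_j) - \Pi_{\phi(P)}\phi(a_j)}^2$, which depend only on kernel values together with a Gram-matrix inversion on $P$; likewise, orthogonal projections and best rank-$k$ truncations live inside the finite-dimensional subspace $\mathrm{span}(\phi(A)) + \mathrm{span}(\phi(\Ytil))$. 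Consequently the DV analysis goes through verbatim with $\RR^d$ replaced by $\Hcal$.

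Concretely, I would invoke the following kernel analogue of the standard DV bound: if $V \subseteq \Hcal$ is any subspace and $\Ytil$ is a set of $t$ columns of $A$ drawn i.i.d.\ with probability proportional to $\nbr{\phi(a_j) - \Pi_V \phi(a_j)}^2$, then for $W = V + \mathrm{span}(\phi(\Ytil))$,
\begin{equation*}
\EE\!\left[\nbr{\phi(A) - [\Pi_W \phi(A)]_k}^2\right] \le \nbr{\phi(A) - [\phi(A)]_k}^2 + \frac{k}{t}\,\nbr{\phi(A) - \Pi_V \phi(A)}^2.
\end{equation*}
Plugging in the hypothesis that $\mathrm{span}(\phi(P))$ contains a $(2,\Delta)$-approximation gives $\nbr{\phi(A) - \Pi_{\phi(P)}\phi(A)}^2 \le 2\,\nbr{\phi(A) - [\phi(A)]_k}^2 + \Delta$, so taking $t = C k/\epsilon$ for a sufficiently large constant $C$ makes the last summand at most $\tfrac{\epsilon}{200}\nbr{\phi(A) - [\phi(A)]_k}^2 + \tfrac{\epsilon}{200}\Delta$. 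Markov's inequality applied to the non-negative excess $\nbr{\phi(A) - [\Pi_W \phi(A)]_k}^2 - \nbr{\phi(A) - [\phi(A)]_k}^2$ then converts this expectation bound into the claimed $0.99$ probability statement. Since $Y = \Ytil \cup P$ implies $\mathrm{span}(\phi(Y)) \supseteq W$, the best rank-$k$ subspace $L$ inside $\mathrm{span}(\phi(Y))$ inherits the same error bound, yielding the desired rank-$k$ $(1+\epsilon,\Delta)$-approximation (the small $O(\epsilon)\Delta$ contribution is absorbed into $\Delta$ via $\epsilon \le 1$).

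The main obstacle I anticipate is not the algebra, which is a routine kernelization of DV, but the bookkeeping of the additive error: one has to choose the constant hidden in $t = O(k/\epsilon)$ large enough that after the $k/t$ shrinkage \emph{and} the Markov loss factor, the leftover additive error stays bounded by $\Delta$ rather than an inflated multiple of it. A secondary subtlety is justifying the lift to a possibly infinite-dimensional $\Hcal$; this is harmless because all objects of interest---projections, Frobenius-type norms, rank-$k$ truncations---live inside $\mathrm{span}(\phi(A)) + \mathrm{span}(\phi(\Ytil))$ and are computable from kernel evaluations alone, so the finite-dimensional DV argument transfers without modification.
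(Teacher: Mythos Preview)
Your proposal is correct and follows essentially the same route as the paper: invoke the Deshpande--Vempala adaptive sampling bound (Theorem~3 in \cite{DesVem2006}) in the feature space, plug in the $(2,\Delta)$ hypothesis on $P$ to control the residual term, and finish with Markov's inequality. Your treatment is in fact more careful than the paper's terse ``by Markov's inequality'': applying Markov to the nonnegative excess $\nbr{\phi(A) - [\Pi_W\phi(A)]_k}^2 - \nbr{\phi(A) - [\phi(A)]_k}^2$ (rather than to the full error) is exactly what is needed to avoid the $100\times$ blowup, and your explicit tracking of the $O(\epsilon)\Delta$ leftover and the finite-dimensional reduction for $\Hcal$ are points the paper leaves implicit.
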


Therefore, we solves the column subset selection problem for kernels in the distributed setting, with $O(k \log k + k/\epsilon)$ selected columns and with a communication of only $O(s\rho k/\epsilon + sk^2)$. This also provides the foundation for kernel PCA task.


\subsection{Computing an Approximation} \label{sec:disLR}

\begin{algorithm}[t]
\caption{Computing an Approximation:  $L=\textbf{disLR}(\cbr{A^i}_{i=1}^s$, $Y$, $k$, $\epsilon, \Delta$)} \label{alg:disLR}
  \begin{algorithmic}[1]
      \STATE Each worker $i$: 
			compute the basis $Q$ for $\phi(Y)$ and $\Pi^i = Q^\top \phi(A^i)$; 
			do an $\epsilon$-subspace embedding $\Pi^i T^i \in \RR^{|Y| \times w}$ with $w = O(|Y|/\epsilon^2)$, and send $\Pi^i T^i$ to \CP; 
			\STATE  \CP:  concatenate $\Pi T = \sbr{\Pi^1 T^1, \dots, \Pi^s T^s}$ and send the top $k$ singular vectors $W$ of $\Pi T$ to the workers.
			\STATE Each worker $i$: set $L = QW$.
  \end{algorithmic}
\end{algorithm}

To compute a good approximation in the span of $\phi(Y)$, the na\"ive approach is to project the data to the span and compute SVD there. However, the communication will be linear in the number of data points. Subspace embedding can be used to sketch the projected data, so that the number of data points is greatly reduced.

Algorithm~\ref{alg:disKPCA} describes the details and Figure~\ref{fig:algorithm_overview}(d) shows an illustration. To compute the best rank-$k$ approximation for the projected data $\Pi$, we do a subspace embedding on the right hand side, \ie, compute $\Pi T = \sbr{\Pi^1 T^1, \dots, \Pi^s T^s}$. Then the algorithm computes the best rank-$k$ approximation $W$ for $\Pi T$, which is then a good approximation for $\Pi$ and thus $\phi(A)$. It then returns $L$, the representation of $W$ in the coordinate system of $\phi(A)$.  The output $L$ is guaranteed to be a good approximation. 

\begin{lemma} \label{lem:approxLA}
If there is a rank-$k$ $(1+\epsilon, \Delta)$-approximation subspace in the span of $\phi(Y)$, then
\[
	\nbr{LL^\top \phi(A) - \phi(A)}^2  \leq (1+\epsilon)^2\nbr{ \phi(A) - \sbr{\phi(A)}_k }^2 + (1+\epsilon)\Delta.
\]
\end{lemma}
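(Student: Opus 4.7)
\begin{proofsketch}
The plan is to reduce everything to a low-rank approximation statement about the finite-dimensional matrix $\Pi = Q^\top \phi(A) \in \RR^{|Y|\times n}$, and then invoke the standard subspace-embedding-preserves-top-$k$-subspace guarantee for the right-sketch $\Pi T$.

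First, I will unpack the hypothesis. By assumption there is a rank-$k$ subspace $L^\star \subseteq \mathrm{span}(\phi(Y))$ achieving
\[
  \nbr{\phi(A) - L^\star L^{\star\top}\phi(A)}^2 \leq (1+\epsilon)\nbr{\phi(A) - [\phi(A)]_k}^2 + \Delta.
\]
Since $Q$ is an orthonormal basis of $\mathrm{span}(\phi(Y))$, I can write $L^\star = QZ^\star$ for some $Z^\star \in \RR^{|Y|\times k}$ with $Z^{\star\top}Z^\star = I_k$. Likewise the algorithm's output is $L = QW$ with $W^\top W = I_k$, so $LL^\top \phi(A) = Q W W^\top Q^\top \phi(A) = QWW^\top \Pi$.

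Next, using $Q^\top Q = I$ and the Pythagorean identity, for any orthonormal $Z \in \RR^{|Y|\times k}$,
\[
  \nbr{\phi(A) - QZZ^\top \Pi}^2 = \nbr{\phi(A) - QQ^\top \phi(A)}^2 + \nbr{\Pi - ZZ^\top\Pi}^2,
\]
because the first term lies in $\mathrm{span}(\phi(Y))^\perp$ and the second in $\mathrm{span}(\phi(Y))$. Applying this to both $Z=W$ and $Z=Z^\star$ reduces the target inequality to the finite-dimensional comparison
\[
  \nbr{\Pi - WW^\top\Pi}^2 \ \leq\ (1+\epsilon)\,\nbr{\Pi - Z^\star Z^{\star\top}\Pi}^2.
\]

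Now I invoke the subspace-embedding property of $T$. The key standard fact is that when $T$ is an $\epsilon$-subspace embedding applied on the right, the top-$k$ left singular vectors $W$ of $\Pi T$ span a subspace that is $(1+O(\epsilon))$-competitive with the best rank-$k$ left approximation of $\Pi$: namely $\nbr{\Pi - WW^\top\Pi}_F^2 \leq (1+\epsilon)\nbr{\Pi - [\Pi]_k}_F^2$. This is the same mechanism that makes sketched PCA work and follows from the approximate-matrix-product / projection-cost-preservation analysis of sketches of dimension $O(|Y|/\epsilon^2)$, which is exactly the $w$ chosen in Algorithm~\ref{alg:disLR}. Since $[\Pi]_k$ is the unconstrained best rank-$k$ approximation, it beats $Z^\star Z^{\star\top}\Pi$, giving the displayed inequality.

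Chaining everything together,
\[
  \nbr{\phi(A) - LL^\top\phi(A)}^2 \leq (1+\epsilon)\,\nbr{\phi(A) - L^\star L^{\star\top}\phi(A)}^2 \leq (1+\epsilon)^2\nbr{\phi(A) - [\phi(A)]_k}^2 + (1+\epsilon)\Delta,
\]
which is exactly the claim. The main obstacle, and the only non-bookkeeping step, is justifying the $(1+\epsilon)$-approximate-SVD guarantee for $W$ obtained from $\Pi T$; I expect this to be a direct appeal to the sketching literature (\eg\ Sarl\'os / \cite{Woodruff14}), since $\Pi$ has only $|Y|$ rows and $T$ is the prescribed right $\epsilon$-subspace embedding of width $O(|Y|/\epsilon^2)$. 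The Pythagorean reduction and the two-level triangle-type chaining are then routine.
\end{proofsketch}
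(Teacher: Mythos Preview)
Your proposal is correct and follows essentially the same route as the paper: the paper also uses the Pythagorean decomposition $\nbr{LL^\top\phi(A)-\phi(A)}^2 = \nbr{\Pi - WW^\top\Pi}^2 + \nbr{\phi(A)-QQ^\top\phi(A)}^2$, invokes the sketched-PCA guarantee (citing Theorem~7 in~\cite{KanVemWoo14}) to get $\nbr{\Pi - WW^\top\Pi}^2 \le (1+\epsilon)\nbr{\Pi-[\Pi]_k}^2$, and then compares $[\Pi]_k$ to the assumed rank-$k$ approximant $X$ in $\mathrm{span}(\phi(Y))$ to finish. Your reduction to $\nbr{\Pi - WW^\top\Pi}^2 \le (1+\epsilon)\nbr{\Pi - Z^\star Z^{\star\top}\Pi}^2$ and subsequent chaining is a slightly cleaner packaging of the same three ingredients.
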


\begin{proofsketch}
For our choice of $w$, $T^i$ is an $\epsilon$-subspace embedding matrix for $\Pi^i$. Then their concatenation $B$ is an $\epsilon$-subspace embedding for $\Pi$, the concatenation of $\Pi^i$.  
Then we can apply the idea  implicit in~\cite{KanVemWoo14}. 


By Pythagorean Theorem, the error can be factorized into 
\begin{align*}
\underbrace{\nbr{LL^\top  \phi(A) - Q^\top \phi(A)}^2}_{T1} + \underbrace{\nbr{\phi(A) - Q Q^\top \phi(A)}^2}_{T2}.
\end{align*}
Since $LL^\top = QWW^\top Q^\top$,
\begin{align*}
T1 = \nbr{WW^\top Q^\top  \phi(A) - Q^\top \phi(A)}^2.
\end{align*}

Note that $\Pi = Q^\top \phi(A)$, and $W$ is the best rank-$k$ subspace for its embedding $\Pi T$. By property of $T$ (Theorem 7 in \cite{KanVemWoo14}), it is also a good approximation for $\Pi$. So 
\begin{align*} 
T1 \approx \nbr{[Q^\top  \phi(A)]_k - Q^\top \phi(A)}^2 =\nbr{Q [Q^\top  \phi(A)]_k - QQ^\top \phi(A)}^2.
\end{align*}
Combining this with $T2$, and applying Pythagorean Theorem again, we know that the error is roughly 
\begin{align*} 
\nbr{Q[Q^\top \phi(A)]_k   - \phi(A) }^2.
\end{align*}

Now, by assumption, there is a rank-$k$ $(1+\epsilon, \Delta)$-approximation subspace $X$ in the span of $\phi(Y)$.
Since $[Q^\top \phi(A)]_k$ is the best rank-$k$ approximation  to $Q^\top \phi(A)$, 
\begin{align*} 
& \nbr{Q[Q^\top \phi(A)]_k   - \phi(A) }^2 \\
 = & \nbr{Q[Q^\top \phi(A)]_k - QQ^\top \phi(A) }^2  + \nbr{QQ^\top \phi(A) - \phi(A) }^2 \\
\le & \nbr{X - QQ^\top \phi(A) }^2  + \nbr{QQ^\top \phi(A) - \phi(A) }^2 \\
= & \nbr{X - \phi(A) }^2.
\end{align*}
The lemma then follows.
\end{proofsketch}

\subsection{Overall Algorithm}
Now, putting things together, we obtain our final algorithm for distributed kernel PCA (Algorithm~\ref{alg:disKPCA}). 
Our main result, Theorem~\ref{thm:disKPCA}, follows by combining all the lemmas in the previous subsections (with properly adjusted $\epsilon$ and $\Delta$). 

\begin{algorithm}[t]
\caption{Distributed Kernel PCA: $L=\textbf{disKPCA}(\cbr{A^i}_{i=1}^s, k, \epsilon, \Delta)$} \label{alg:disKPCA}
  \begin{algorithmic}[1]
      \STATE Each worker $i$: do a $(1/4, \Delta)$-good subspace embedding $E^i = S(\phi(A^i))  \in \RR^{t \times n_i}, t = O(k)$; 
			\STATE Compute the leverage scores: 
			$\{\tilde{\ell}^{i}_j\}=\textbf{disLS}(\cbr{E^i}_{i=1}^s, k)$;
			\STATE Sample points: $Y=\textbf{RepSample}(\cbr{A^i}_{i=1}^s, \{\tilde{\ell}^{i}_j\}, k, \epsilon)$;
			\STATE Output $L=\textbf{disLR}(\cbr{A^i}_{i=1}^s, Y, k, \epsilon, \Delta)$.
  \end{algorithmic}
\end{algorithm}


\newcommand{\figscale}{0.24}

\begin{table}
\centering
\caption{ Dataset specification: $d$ is the original feature dimension, $n$ is the number of data points, and $s$ is the total number of workers storing the dataset distributedly. Among them, bow and 20news are sparse datasets. All datasets except mnist8m are taken from UCI repository~\cite{BacLic13}~and~\cite{BalSadWhi14}.}\label{tbl:dataset}\vspace{-2mm}
\begin{tabular}{lrrr}
\toprule
Dataset & $d$ & $n$ & $s$ \\\midrule
bow & 100,000 & 8,000,000 & 200\\
higgs & 28 & 11,000,000 &  200 \\
mnist8m & 784  & 8,000,000 & 100  \\
susy & 18 & 5,000,000 & 100 \\
yearpredmsd & 90 & 463,715 & 10 \\
ctslice & 384 & 53,500 &  10  \\
20news & 61,118 & 11,269 & 5 \\
protein & 9 & 41,157 & 5 \\\midrule
har & 561 & 10,299 & 5 \\
insurance & 85 & 9,822 & 5  \\
\bottomrule
\end{tabular}
\end{table}

\section{Experiments} \label{sec:exp}

\subsection{Datasets }
We use ten datasets to evaluate our algorithm. They contain both sparse and dense data and come from a variety of different domains, such as text, images, high energy physics and biology. We use two smaller ones to benchmark against the single-machine batch KPCA algorithm while the rest are large-scale datasets with up to tens of millions of data points and hundreds of thousands dimensions. Refer to Table~\ref{tbl:dataset} for detailed specifications.

Each dataset is partitioned on different workers according to the power law distribution with exponent $2$ to simulate the distribution of the data over large networks~\cite{ClaShaNew09}. Depending on the size of the dataset, the number of workers used ranges from $5$ to $200$ (see Table~\ref{tbl:dataset} for details).

\subsection{Experiment Settings}
Since our key contribution is sampling a small set of data points intelligently, the natural alternative is uniformly sampling. We compare with two variants of uniform sampling algorithms: 1) uniformly sampling representative points and use Algorithm~\ref{alg:disLR} to get KPCA solution (denoted as uniform+disLR); 2) uniformly sampling data points and apply batch KPCA (denoted as uniform+batch KPCA).

For both algorithms, we compare the tradeoff of low rank approximation error and communication cost. Particularly, we compare the communication needed to achieve the same error. Each method is run $5$ times and the mean and the standard deviation are reported.

For polynomial kernel, the degree is  $q = 4$ and for Gaussian RBF kernel, the kernel bandwidth $\sigma$ is set to $0.2$ of the median pairwise distance among a subset of 20000 randomly chosen data points (a.k.a, the ``median trick''). For Gaussian random feature expansion, we use 2000 random features.

In all experiments, we set the number of principle components $k=10$, which is the same number for $k$-means. The algorithm specific parameters are set as follows: 1) The subspace embedding dimension for the feature expansion $t$ is 50; 2) The subspace embedding dimension for the data points $p$ is 250; 3) We vary the number of adaptively sampled points $|\tilde{Y}|$ from 50 to 400 to simulate different communication cost; 4) The subspace embedding dimension $w$ is set to equal $|Y|$.


\begin{figure*}
\centering
\subfigure[error on insurance]{\includegraphics[width=\figscale\textwidth]{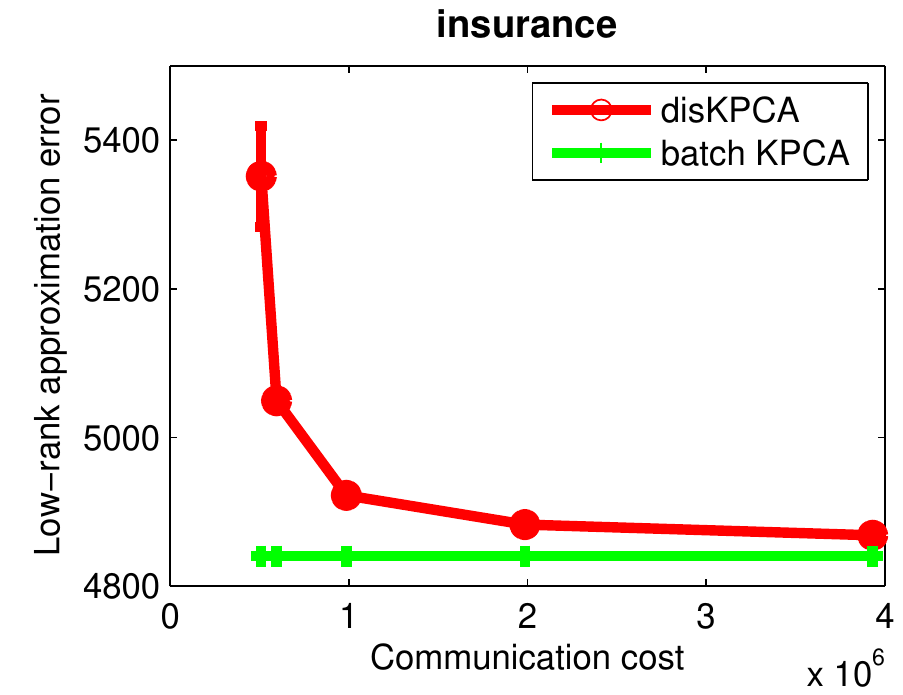}}%
\subfigure[runtime on insurance]{\includegraphics[width=\figscale\textwidth]{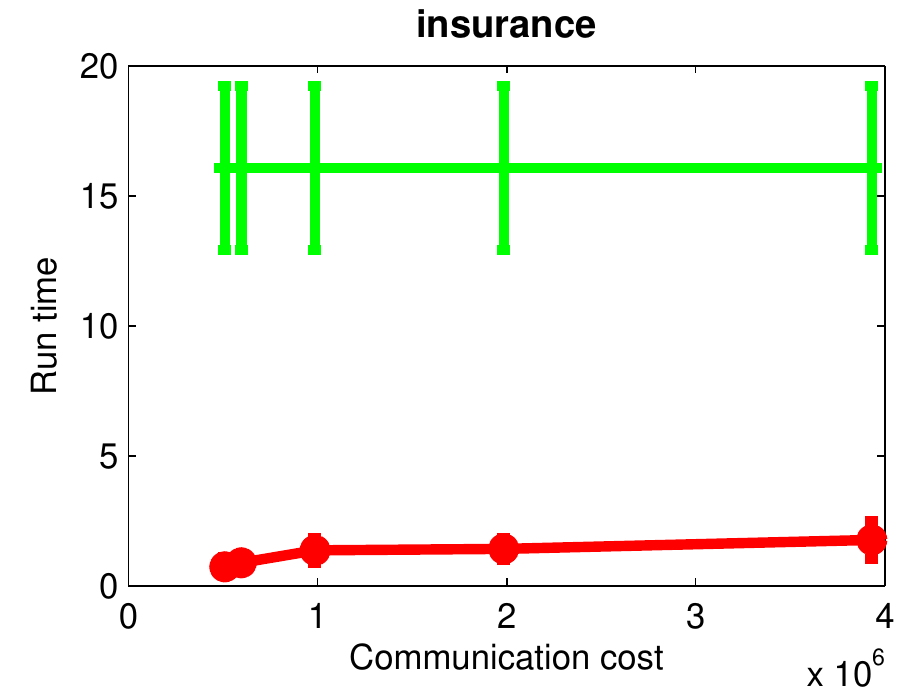}}%
\subfigure[error on har]{\includegraphics[width=\figscale\textwidth]{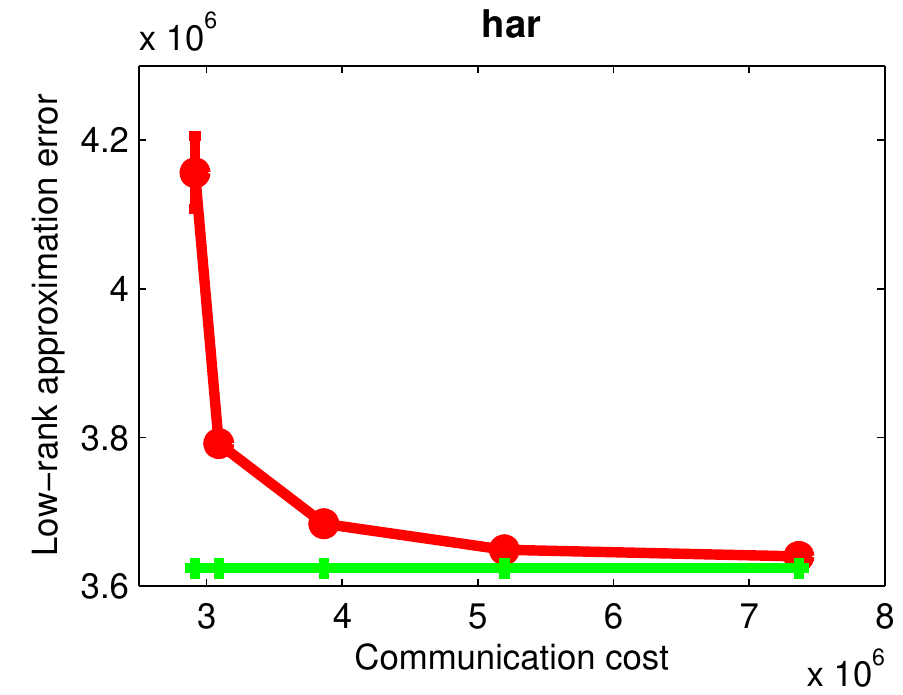}}%
\subfigure[runtime on har]{\includegraphics[width=\figscale\textwidth]{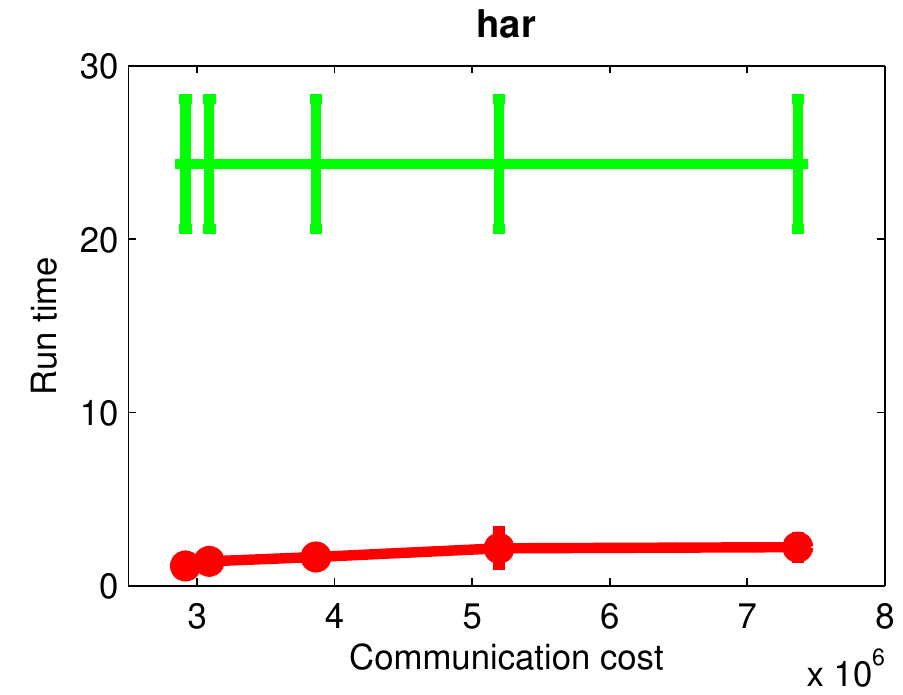}}
\vspace{-4mm}
\caption{KPCA for polynomial kernels on small datasets:   low-rank approximation error and runtime}
\label{fig:poly_kpca_err_small}
\subfigure[error on insurance]{\includegraphics[width=\figscale\textwidth]{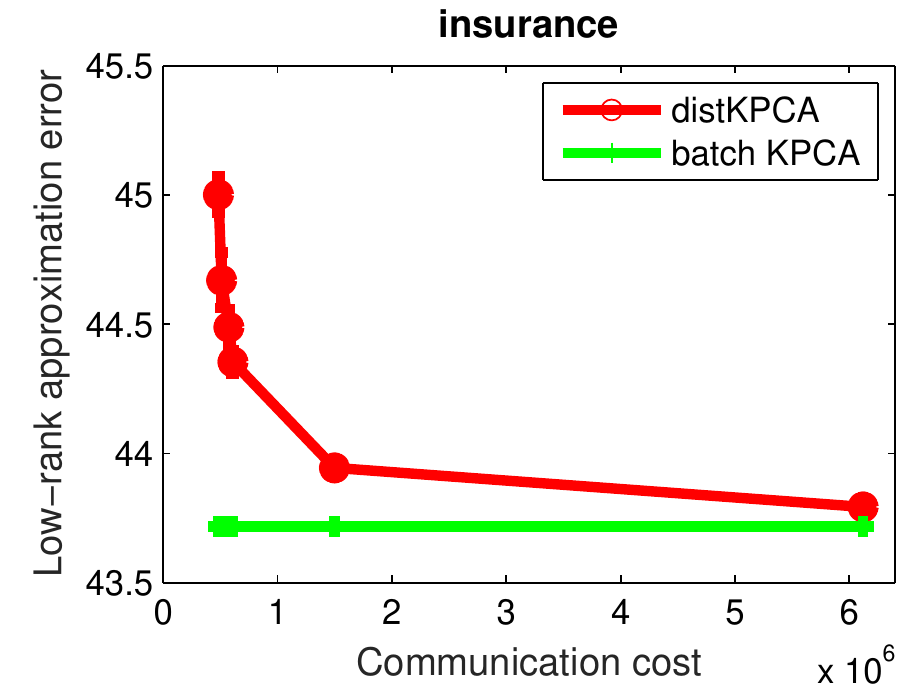}}%
\subfigure[runtime on insurance]{\includegraphics[width=\figscale\textwidth]{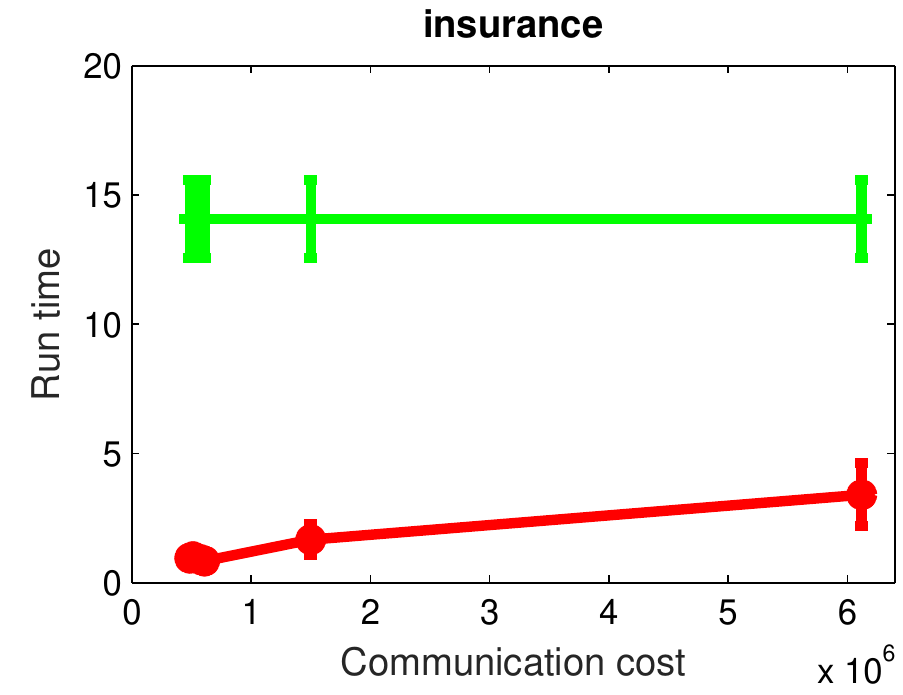}}%
\subfigure[error on har]{\includegraphics[width=\figscale\textwidth]{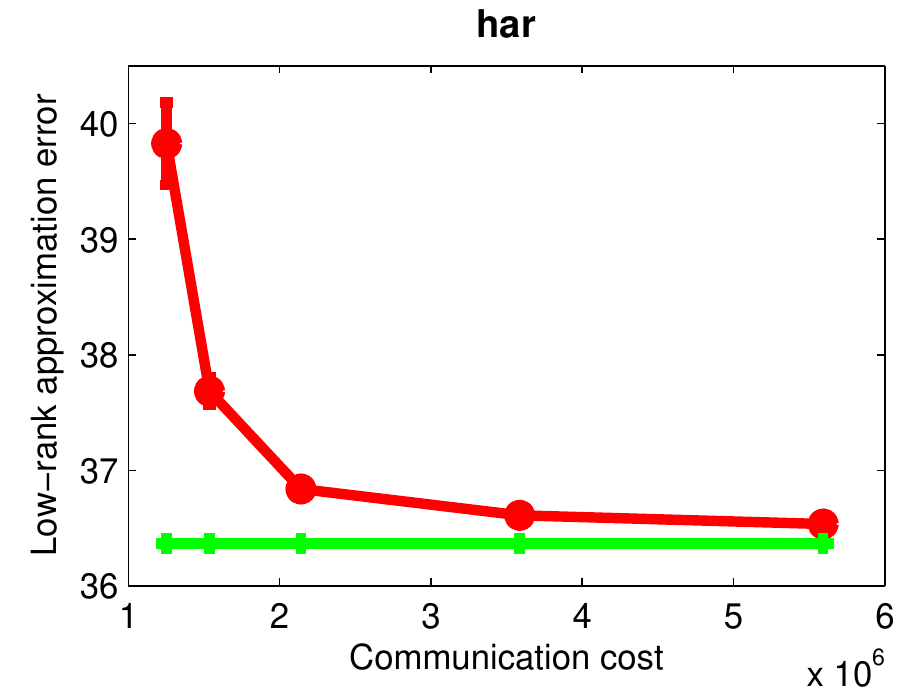}}%
\subfigure[runtime on har]{\includegraphics[width=\figscale\textwidth]{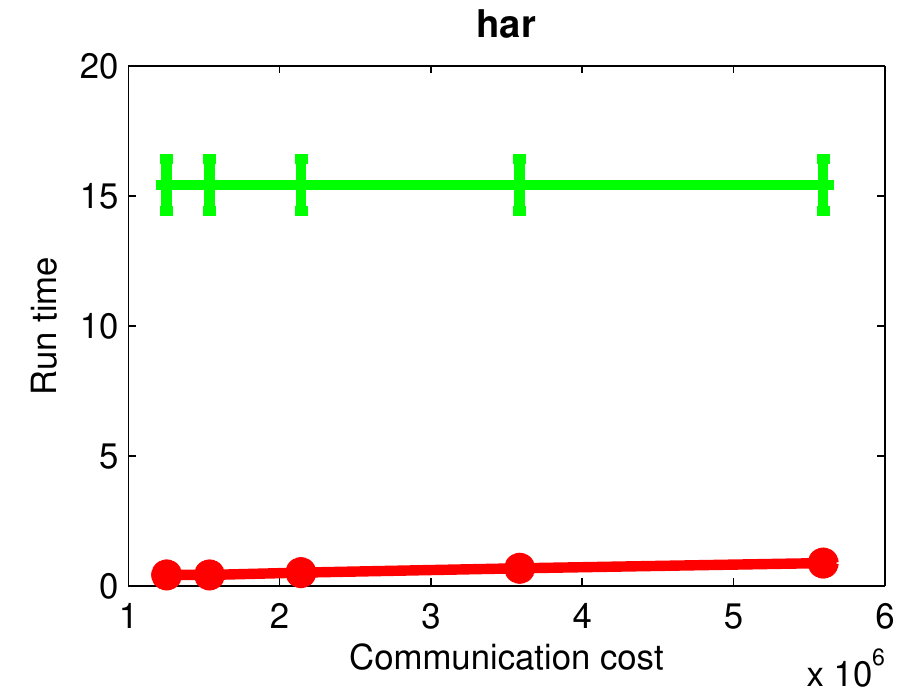}}
\vspace{-4mm}
\caption{KPCA for Gaussian kernels on small datasets: low-rank approximation error and runtime }
\label{fig:gauss_kpca_err_small}
\end{figure*}

\begin{figure*}
\centering
\includegraphics[width=\figscale\textwidth]{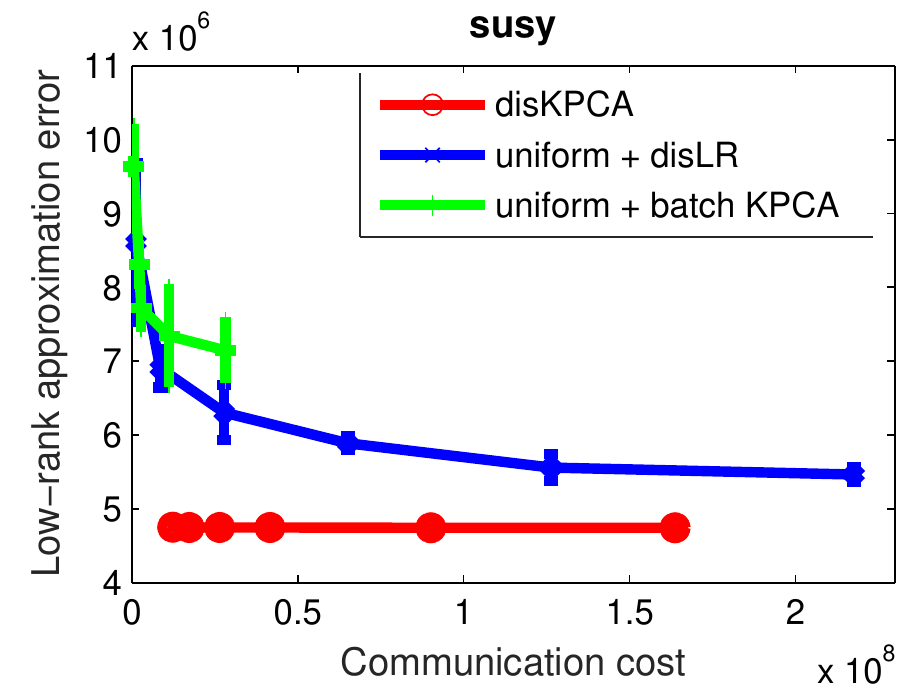}
\includegraphics[width=\figscale\textwidth]{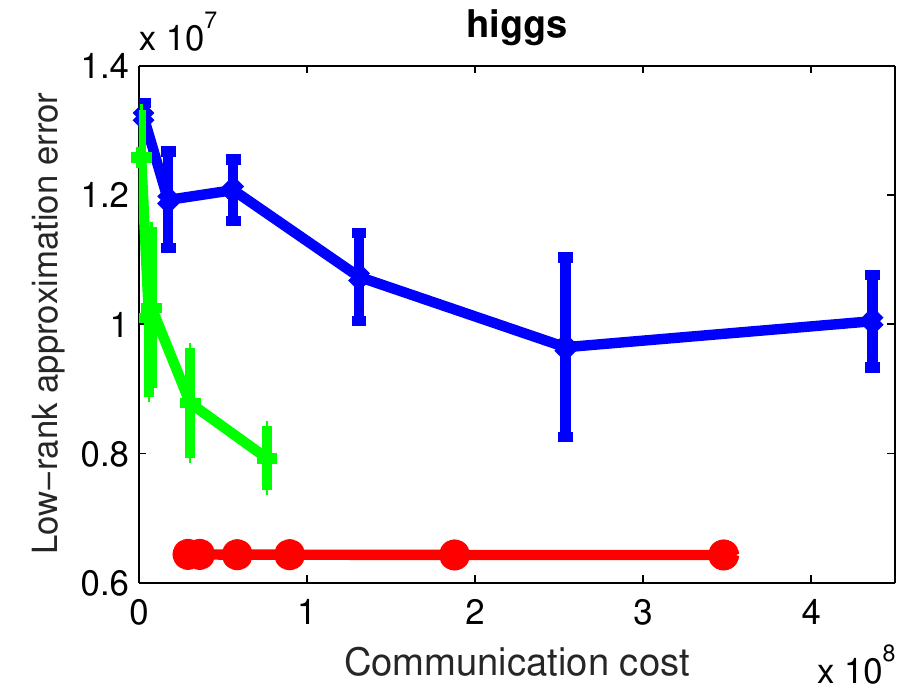}
\includegraphics[width=\figscale\textwidth]{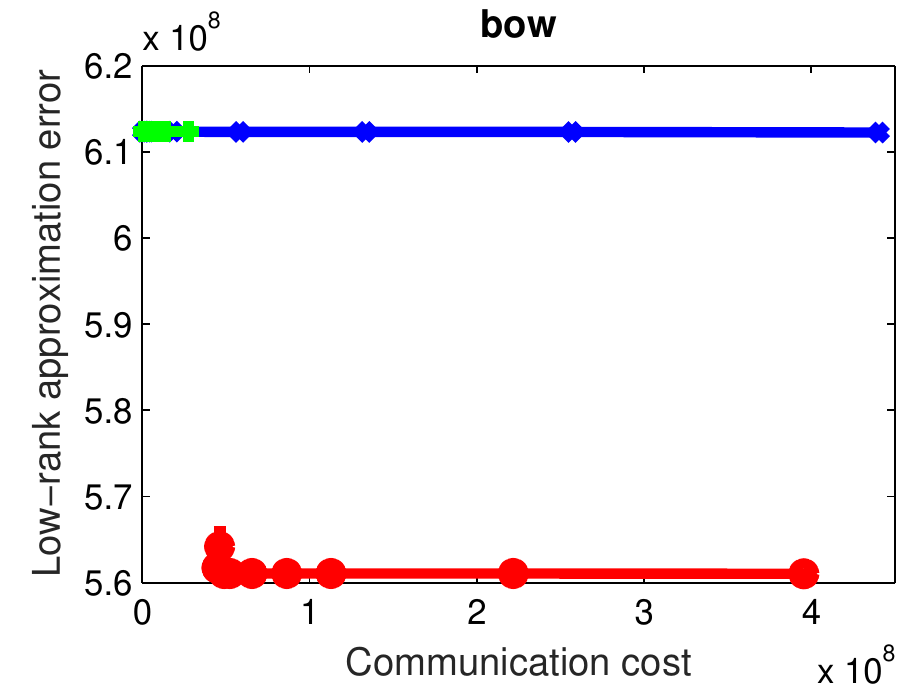}
\includegraphics[width=\figscale\textwidth]{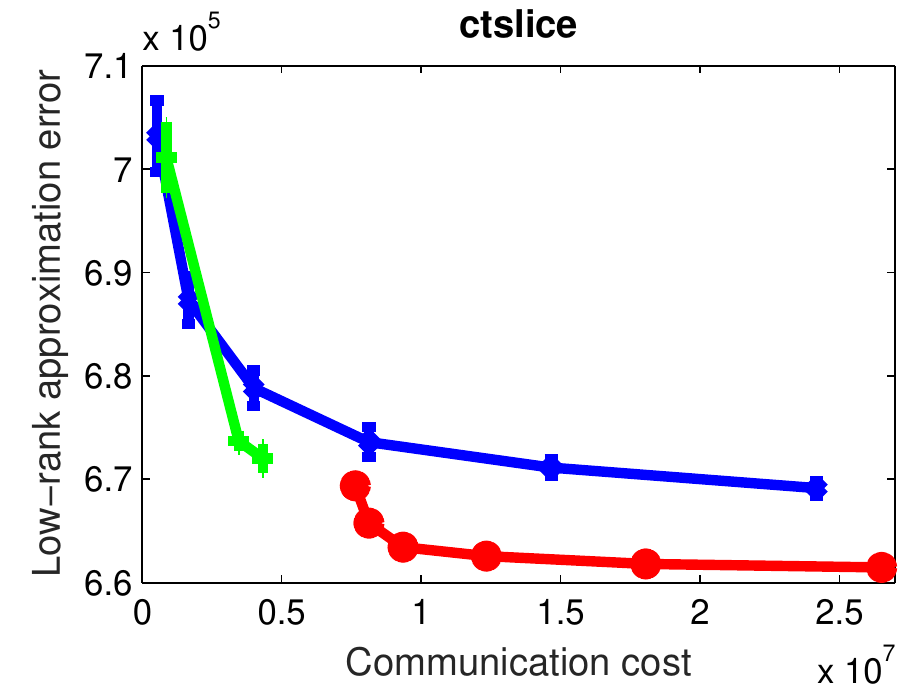}
\vspace{-4mm}
\caption{KPCA for polynomial kernels on larger datasets}
\label{fig:poly_kpca_err}
%
%
%
\includegraphics[width=\figscale\textwidth]{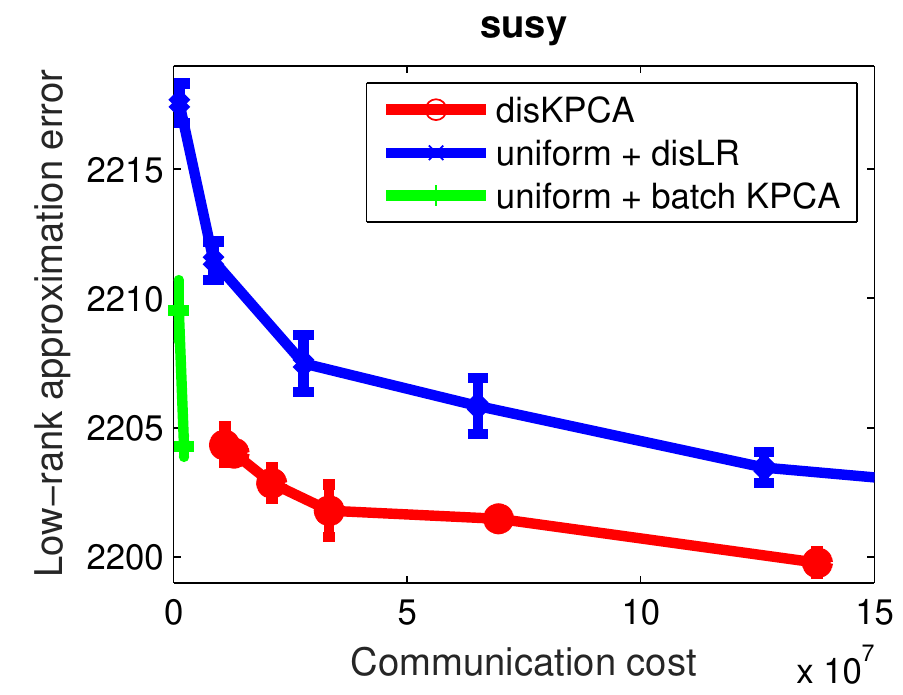}
\includegraphics[width=\figscale\textwidth]{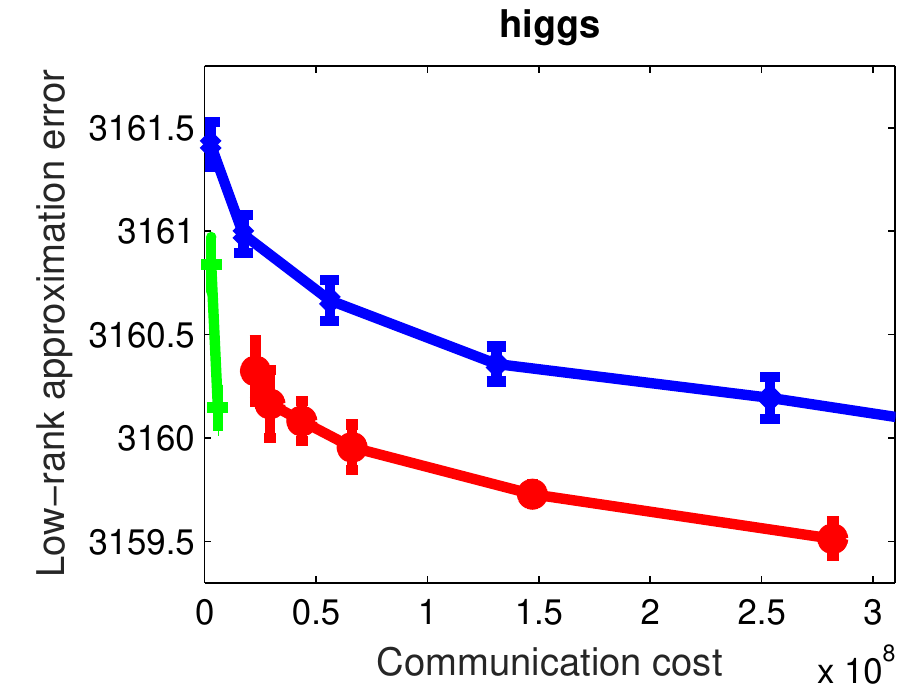}
\includegraphics[width=\figscale\textwidth]{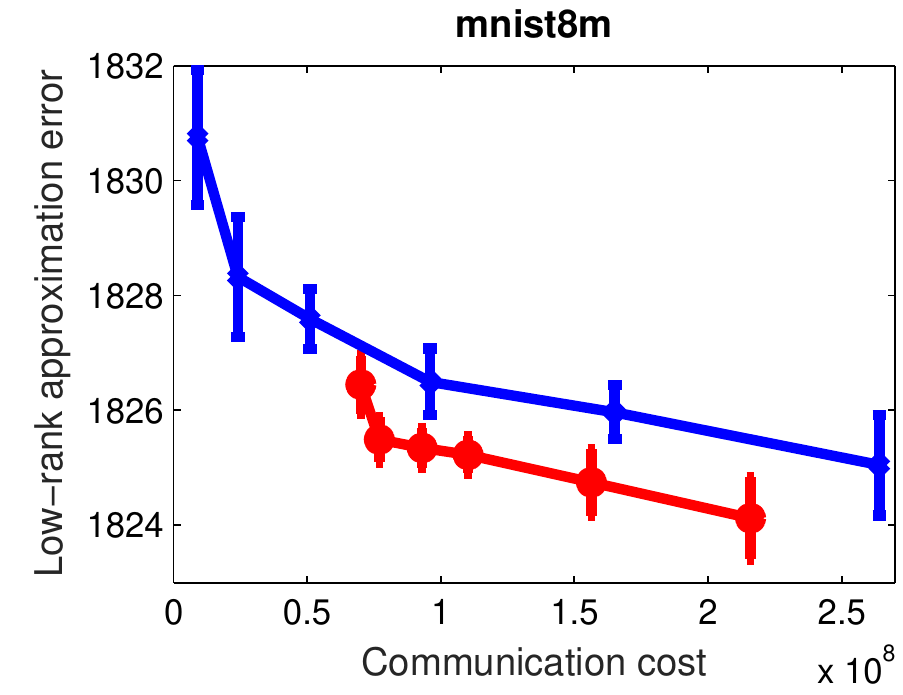}
\includegraphics[width=\figscale\textwidth]{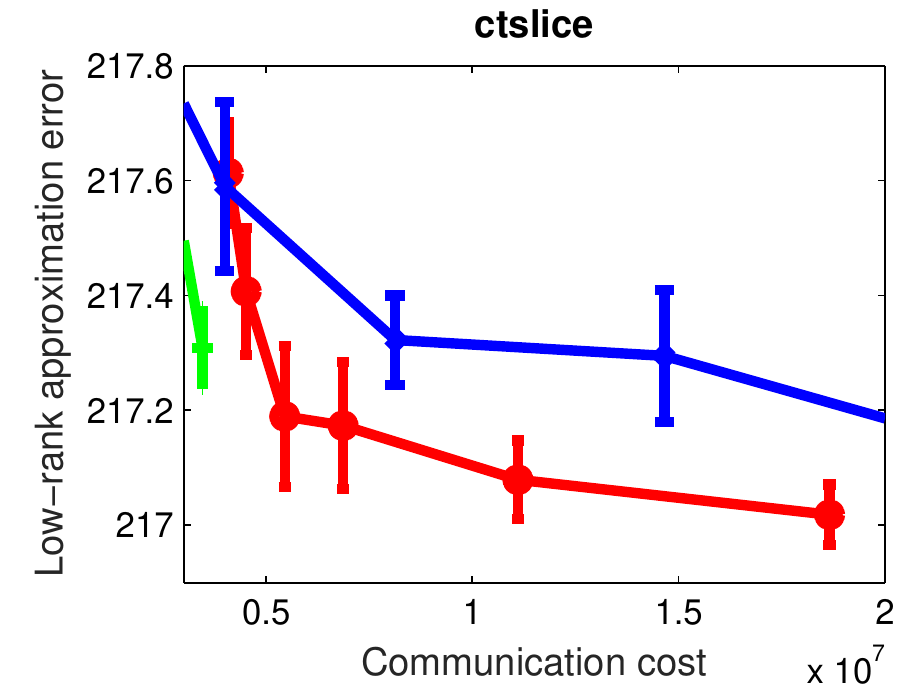}
\vspace{-4mm}
\caption{KPCA for Gaussian kernels on larger datasets}
\label{fig:gauss_kpca_err}
\end{figure*}

\subsection{Comparison with Batch Algorithm}
We compare to the ``ground-truth'' solutions produced by batch KPCA on two small datasets where it is feasible. The experiment results for the polynomial kernel and the Gaussian RBF kernel are presented in Figures~\ref{fig:poly_kpca_err_small}~and~\ref{fig:gauss_kpca_err_small}, respectively.

In both cases, the low-rank approximation error of disKPCA decreases as more communication (more represented points) is allowed. It can nearly match the optimum low-rank approximation error with much fewer data points. In addition, it is much faster: we gain a speed up of 10$\times$ by using five workers.

\subsection{Communication Efficiency}
In this set of experiments, we focus on comparing the tradeoff between communication cost and approximation accuracy on large-scale datasets. The alternative, uniform + batch KPCA, is stopped short in many experiments due to its excessive computation cost for larger number of sampled data points.

Figure~\ref{fig:poly_kpca_err}  demonstrates the performance on polynomial kernels on four large datasets. On all four datasets, our algorithm outperforms the alternatives by significant margins. Especially on bow, which is a sparse dataset, the usage of kernel embeddings takes advantage of the sparsity structure and leads to much smaller error. On other datasets, uniform + disLR cannot match the error achieved by our algorithm even when using much more communication.
 
Figure~\ref{fig:gauss_kpca_err} shows the performance on Gaussian kernels. On mnist8m, the error for uniform + batch KPCA is so large (almost twice of the errors in the figure) that it is not shown. On other datasets, disKPCA achieves significant smaller error. For example, on higgs dataset, to achieve the same approximation error, uniform + disLR requires more than $5$ times communication.
Since it does not have the communication of computing leverage scores, this means that it needs to sample much more points to get similar performance. Therefore, our algorithm is very efficient in communication cost. 

Besides polynomial and Gaussian kernels, we have also conducted experiments using arc-cos kernel~\cite{ChoSau09}. 
The arc-cosine kernels have random feature bases similar to the Rectified Linear Units (ReLU) used in deep learning. In the experiments, we use degree $n=2$ and 
Figure~\ref{fig:arccos_kpca_err} shows the results. Our algorithm consistently achieves better tradeoff between communication and approximation and the benefit is especially more pronounced on sparser dataset such as 20news.

\begin{figure}
\begin{minipage}{.5\textwidth}
\centering
		\includegraphics[width=.48\textwidth]{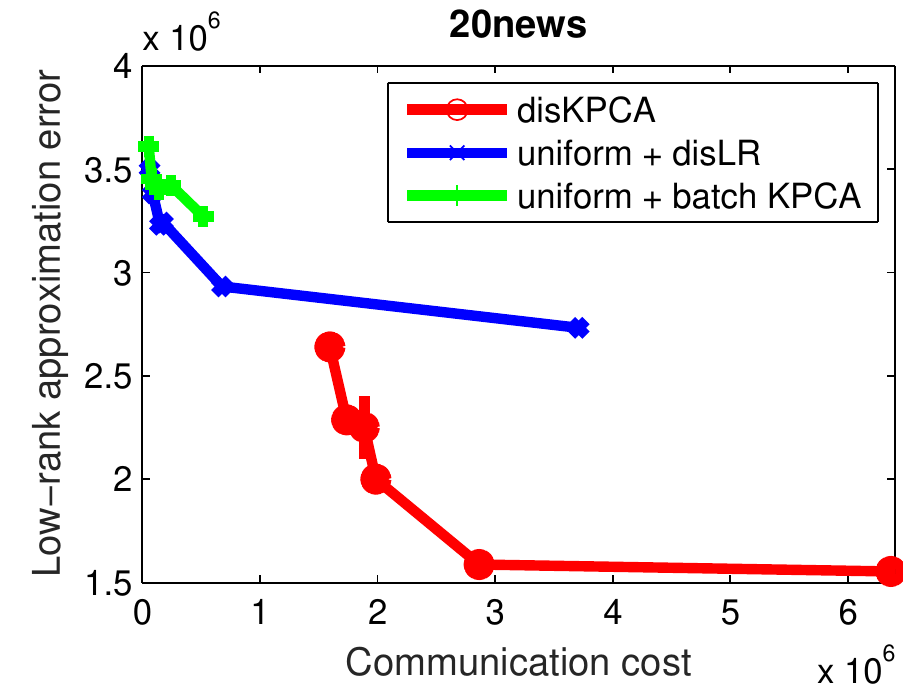}
		\includegraphics[width=.48\textwidth]{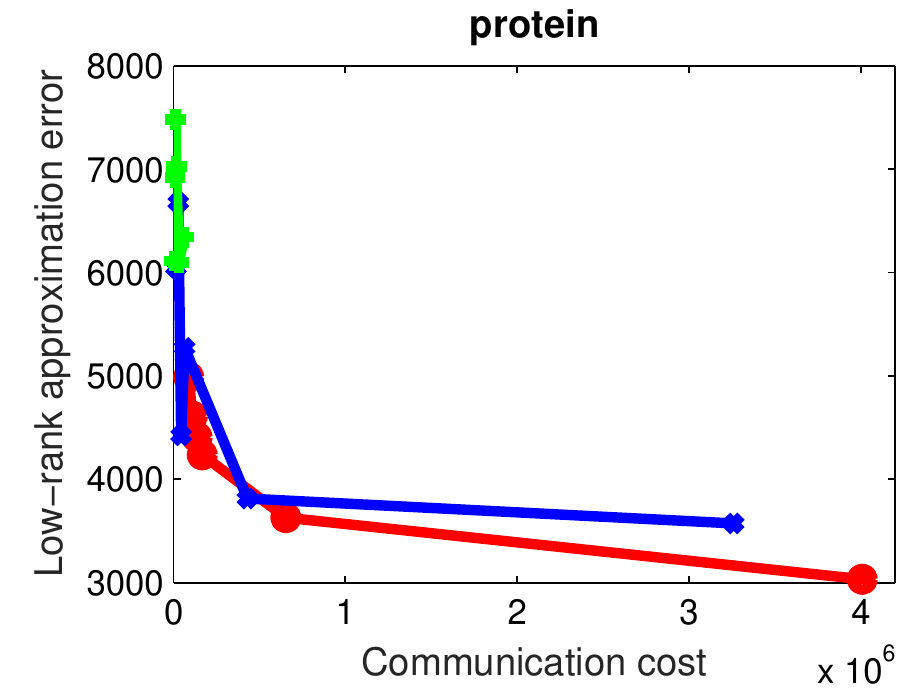}
\vspace{-4mm}
\caption{KPCA results for arc-cos kernels}
\label{fig:arccos_kpca_err}
\end{minipage}
\begin{minipage}{.5\textwidth}
\centering
		\includegraphics[width=.48\textwidth]{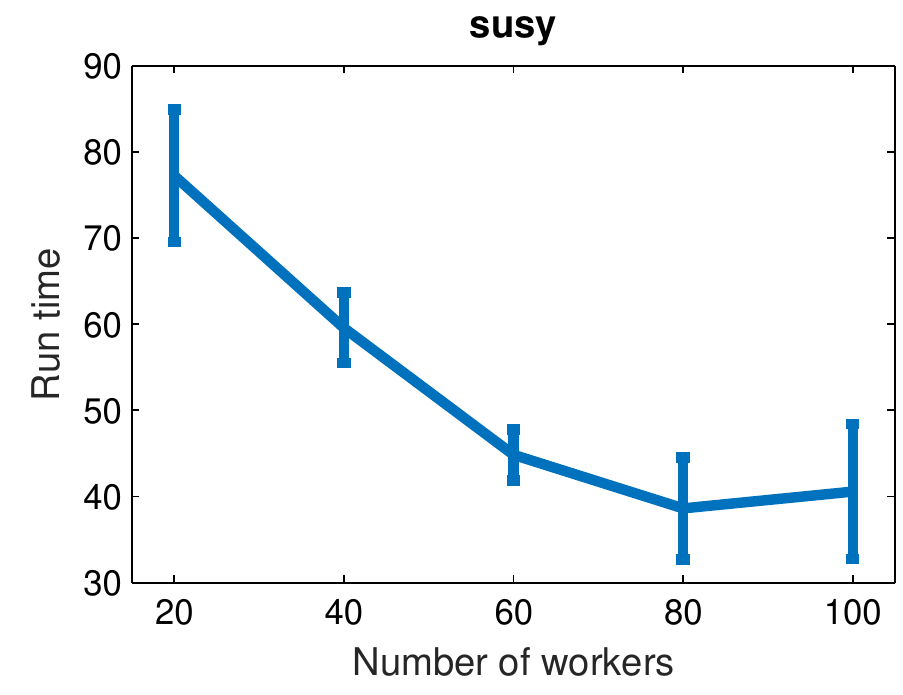}
		\includegraphics[width=.48\textwidth]{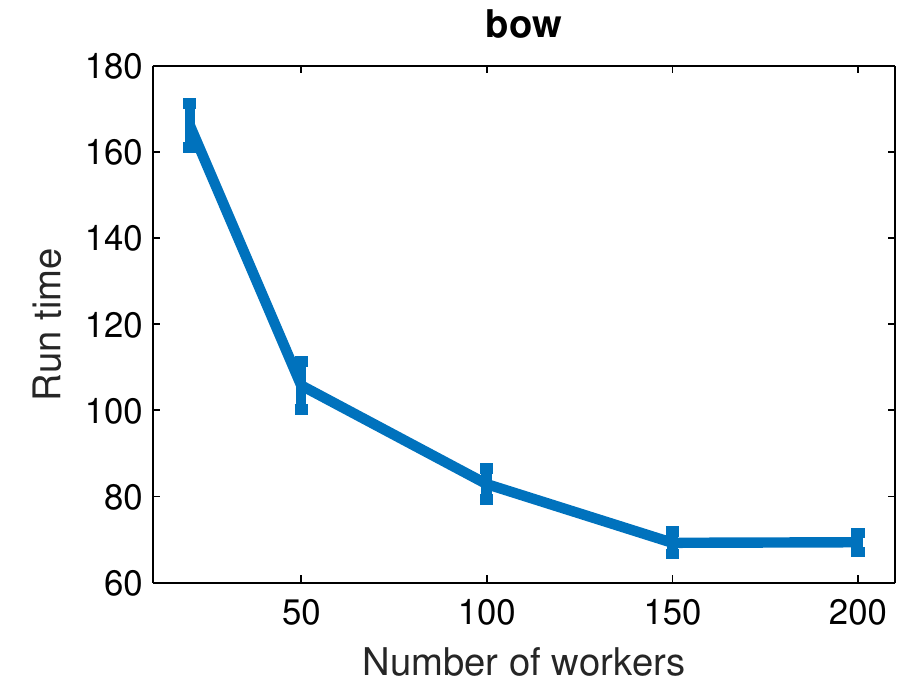}
\vspace{-4mm}
\caption{KPCA scaling results}
\label{fig:scaling}
\end{minipage}
\end{figure}

\subsection{Scaling Results}

In Figure~\ref{fig:scaling}, we present the scaling results for disKPCA. In these experiments, we vary the number of workers and record the corresponding computation time (communication time excluded). On both datasets, the runtime decreases as we use more workers, and it eventually plateaus. Our algorithm gains about $2\times$ speedup by using $4\times$ more workers. Note that our algorithm is designed to strike a good balance between communication and approximation. Even though computation complexity is not our first priority, the experiments show disKPCA still enjoys favorable scaling property.

\subsection{Distributed Spectral Clustering}
We have also experimented using $k$-means clustering as a downstream application. Such combination is known as a form of spectral clustering.
We project the data onto the top $k$ principle components and then apply a distributed $k$-means clustering algorithm~\cite{BalKanLiaWoo14}.
The evaluation criterion is the $k$-means objective, \ie, average distances to the corresponding centers, in the feature space. 

Figure~\ref{fig:poly_kmeans_err2} presents results for polynomial kernels on the 20news and susy datasets and
Figure~\ref{fig:gauss_kmeans_err2} presents results for Gaussian kernels on ctslice and yearpredmsd datasets.
Our disKPCA algorithm compares favorably with the other methods and achieves a better tradeoff of communication and error. 
This means that although the other methods require similar communication, they need to sample more data points
to achieve the same loss, demonstrating the effectiveness of our algorithm.

\begin{figure*}
\centering
\subfigure[Polynomial kernels]{
\label{fig:poly_kmeans_err2}
 \includegraphics[width=\figscale\textwidth]{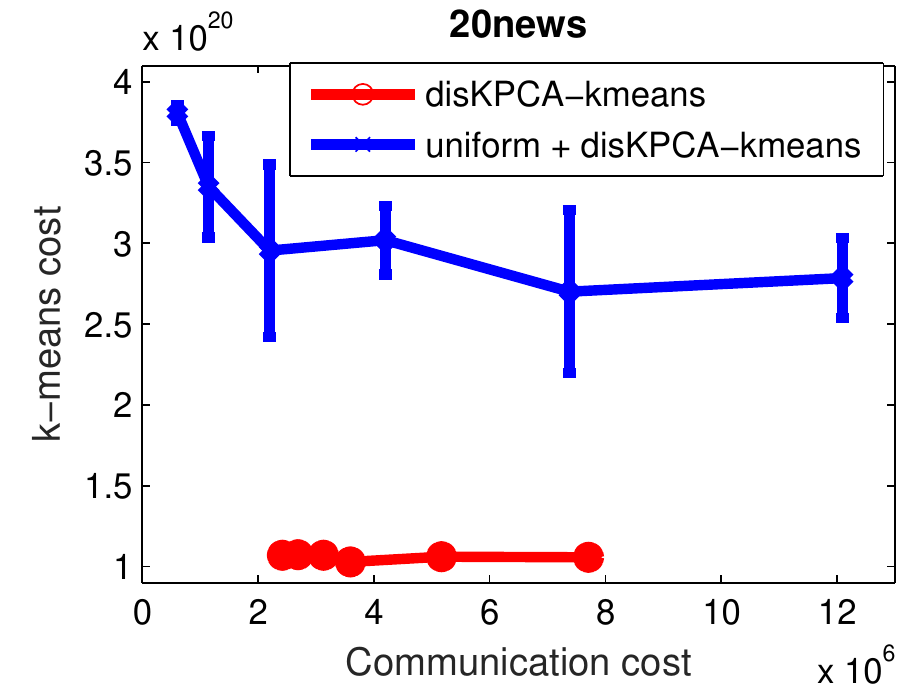}
    \includegraphics[width=\figscale\textwidth]{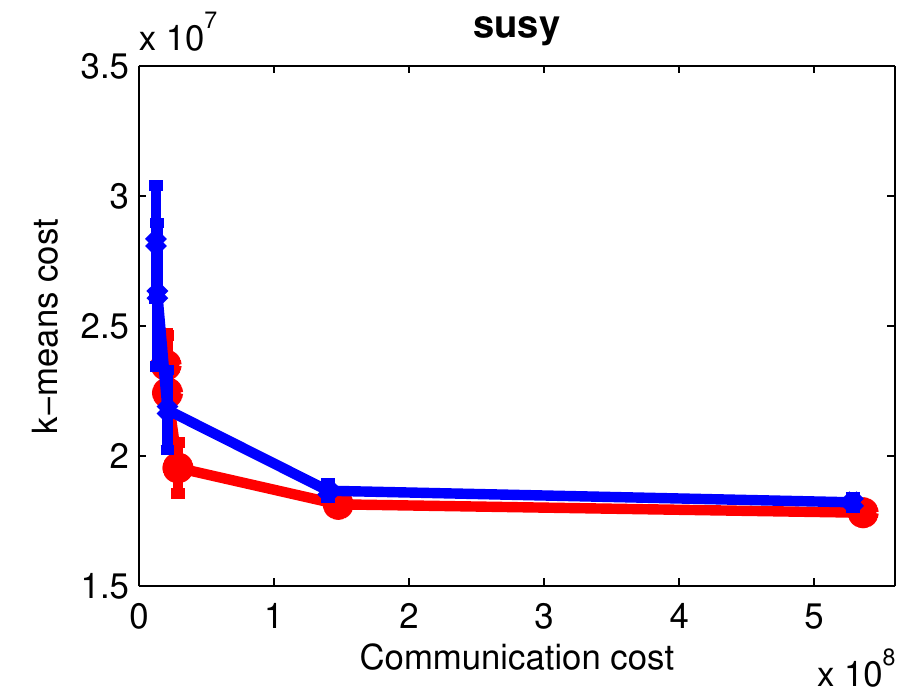}
}%
\subfigure[Gaussian kernels]{
\label{fig:gauss_kmeans_err2}
	\includegraphics[width=\figscale\textwidth]{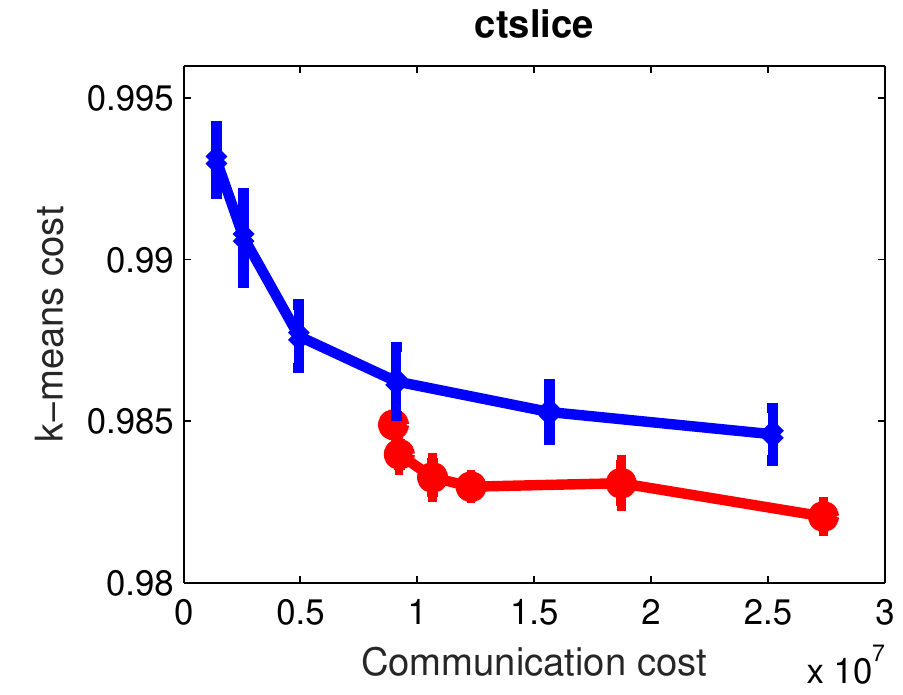}
	\includegraphics[width=\figscale\textwidth]{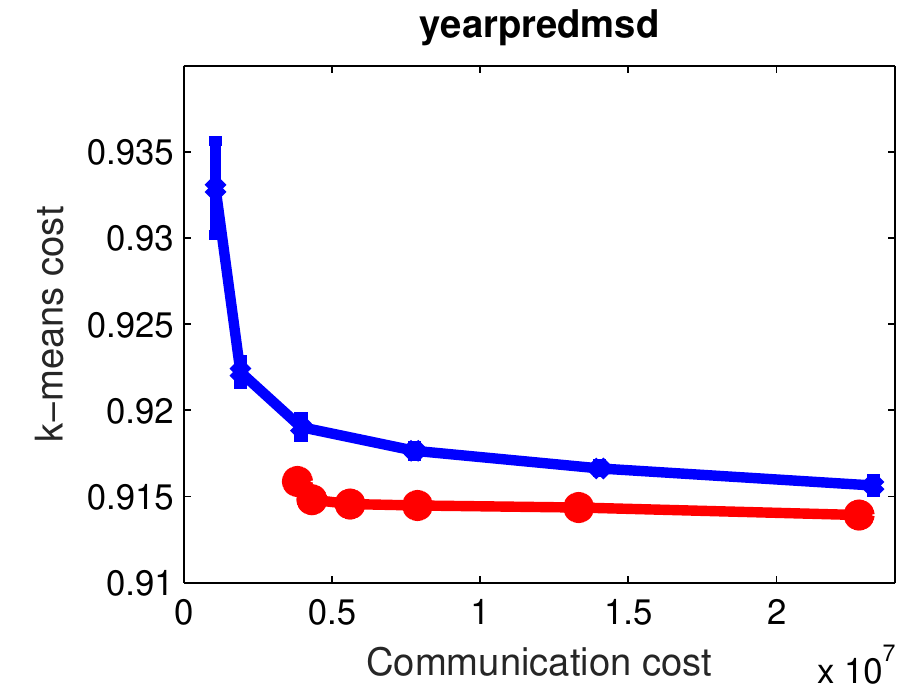}
}
\vspace{-4mm}
   \caption{KPCA + $k$-means clustering}
\label{fig:poly_kmeans}
\end{figure*}


\section{Conclusion} \label{sec:conclu}
This paper proposes a communication efficient distributed algorithm for kernel Principal Component Analysis with theoretical guarantees.  It computes a relative-error approximation compared to the best rank-$k$ subspace, using communication that nearly matches  that of the state-of-the-art algorithms for distributed linear PCA. This is the first distributed algorithm that can achieve such provable approximation and communication bounds. The experimental results show that it can achieve better performance than the baseline using the same communication budget.


\bibliographystyle{unsrt}
\bibliography{bibfile}  

\appendix

\section{Remark on Using Kernel Tricks in the Algorithms} \label{sec:implement}

In computing the final solution from $Y$ in Algorithm~\ref{alg:disKPCA}, we need to compute the projection of $\phi(A)$ onto $\phi(Y)$.
This can be done by using kernel trick and implicit Gram-Schmidt. Note that $\Pi^i = Q^\top \phi(A^i)$ where $Q$ is the basis for $\phi(P)$. Suppose $\phi(Y)$ has QR-factorization $\phi(Y) = QR $. Then $Q = \phi(Y) R^{-1}$ and $Q^\top \phi(A) = (R^{-1})^\top \phi(Y)^\top \phi(A) $ where $\phi(Y)^\top \phi(A) $ is just the kernel value between points in $Y$ and points in $A$. For $R$,  we have $Q^\top Q = (R^{-1})^\top \phi(Y)^\top \phi(Y) R^{-1} = I$, so $R^\top R = \phi(Y)^\top \phi(Y)$ and thus $R$ can be computed by factorizing the kernel matrix on $Y$.

Similarly, to compute the distance for adaptive sampling in Algorithm~\ref{alg:adaptive}, we first need to compute the projection of $\phi(A)$ onto $\phi(P)$, which can be done in the same way. Then the square distance from the original data to the projection can be computed by subtracting the square norm of the projection from the square norm of the original data.

\section{Additional Proofs} \label{sec:complete_proof}
The section provide the missing proofs in the main text.  Some proofs mainly follow known arguments in the literature, slightly generalized to handle the additive error term for our purpose; we include them here for completeness.

\subsection{Properties of Subspace Embeddings for Kernels}

\begin{oneshot}{Lemma~\ref{lem:embedding_range}}
$S$ is a $(1+\epsilon,\Delta)$-good subspace embedding for $\phi(A) \in \Hcal^n$ if it satisfies the following.
\begin{itemize}[noitemsep,nolistsep]
\item[\textbf{P1}] (Subspace Embedding): For any orthonormal $V \in \Hcal^k$ (\ie, $V^\top V$ is the identity), for all $x \in \RR^k$, $$
  \nbr{SV x} = (1\pm c)\nbr{Vx}
$$ 
where $c$ is a sufficiently small constant.
\item[\textbf{P2}] (Approximate Product): for any $M \in \Hcal^n, N \in \Hcal^k$, 
$$
  \nbr{(SN)^\top (SM) - N^\top M}_F^2 \leq {\frac \epsilon  k} \nbr{N}^2 \nbr{M}^2 + \Delta.
$$
\end{itemize}
\end{oneshot}


\begin{proof}
Let $\phi_k := [\phi(A)]_k$ and $\phi := \phi(A)$.  Suppose $\phi_k$ has SVD $\hat{U}\hat{\Sigma} \hat{V}^\top$. 
Define
\[
  \tilde{X} := \argmin_X \nbr{S(\phi_k) X - S(\phi)}_F
\]
and also note that $I = \argmin_X \nbr{\phi_k X - \phi}_F$ since $\phi_k$ is defined to be the best rank-$k$ approximation of $\phi$.

First, consider bounding $\beta := \hat{U}^\top \phi_k (\tilde{X} - I) = \hat{U}^\top  (\phi_k\tilde{X} - \phi_k)$. 
We have 
\[
  \beta  = \underbrace{S(\hat{U})^\top S(\hat{U}) \beta}_{T_1} + \underbrace{(S(\hat{U})^\top S(\hat{U}) - I) \beta}_{T_2}.
\]
For the first term, since $S$ is a linear mapping,  
\[
  T_1 = S(\hat{U})^\top S(\hat{U}) \beta = S(\hat{U})^\top S(\hat{U}\hat{U}^\top \phi_k)  (\tilde{X} - I)  = S(\hat{U})^\top S(\phi_k)  (\tilde{X} - I).
\]
Also note that by definition of $\tilde{X}$, we have $S(\phi_k)^\top (S(\phi_k) \tilde{X} - S(\phi)) = 0$. Since $\hat{U} = \phi_k W$ for some $W$, $S(\hat{U})^\top (S(\phi_k) \tilde{X} - S(\phi)) = W^\top S(\phi_k)^\top (S(\phi_k) \tilde{X} - S(\phi)) = 0$. So 
\[
  T_1  = S(\hat{U})^\top S(\phi_k)  (\tilde{X} - I) + S(\hat{U})^\top (S(\phi) - S(\phi_k) \tilde{X} ) =  S(\hat{U})^\top (S(\phi) - S(\phi_k))
\]
which leads to 
\[
  \nbr{T_1}_F^2 \leq \frac{\epsilon}{k} \nbr{\hat{U}}_\Hcal^2 \nbr{\phi-\phi_k}_\Hcal^2 + \Delta = \epsilon \nbr{\phi-\phi_k}_\Hcal^2 + \Delta. 
\]
For the second term, by the subspace embedding property, we have
\[
  \nbr{T_2}_F^2 \leq \nbr{S(\hat{U})^\top S(\hat{U}) - I}_2^2 \nbr{\beta}_F^2.
\]
Therefore, we have 
\[
  \nbr{\beta}_F^2 \leq \frac{\epsilon}{1-c^2_0} \nbr{\phi-\phi_k}_\Hcal^2 + \frac{\Delta}{1 - c_0^2}.
\]

Since $I = \argmin_X \nbr{\phi_k X - \phi}_F$, we have $\phi_k^\top (\phi_k - \phi) = 0$.  Then by Pythagorean Theorem, 
\[
  \nbr{\phi_k \tilde{X} - \phi}_\Hcal^2 =  \nbr{\phi_k - \phi}_\Hcal^2 + \nbr{\phi_k \tilde{X} - \phi_k}_\Hcal^2.
\]
Since $\nbr{\beta}_F^2 = \nbr{\phi_k \tilde{X} - \phi_k}_\Hcal^2$, we arrive at 
\[
  \nbr{\phi_k \tilde{X} - \phi}_\Hcal^2 \leq (1+O(\epsilon)) \nbr{\phi_k - \phi}_\Hcal^2 + O(\Delta).
\]

Note that $\tilde{X} = (S(\phi_k))^\dagger S(\phi)$ where $(S(\phi_k))^\dagger$ is the pseudoinverse of $S(\phi_k)$. 
Then $\phi_k \tilde{X} = \phi_k(S(\phi_k))^\dagger E$, and $W := \phi_k(S(\phi_k))^\dagger$ satisfies the statement.
\end{proof}

\subsection{Existence of Subspace Embeddings for Kernels}

\begin{oneshot}{Lemma~\ref{lem:poly}}
For polynomial kernels $\kappa(x,y) = (\inner{x}{y})^q$, there exists an $(1+\epsilon,0)$-good subspace embedding matrix $S: \RR^{d^q} \mapsto \RR^t$ with $t = O(k/\epsilon)$. 
\end{oneshot}

\begin{proof}
First use \TS~\cite{AvrNguWoo14} to bring the dimension down to $O(3^q k^2 + k/\epsilon)$. Then, we can use an i.i.d. Gaussian matrix, which reduces it to $t = O(k/\epsilon)$; or we can first use fast Hadamard transformation to bring it down to $O(k \mathrm{polylog}(k) / \epsilon)$, then multiply again by i.i.d Gaussians to bring down to $O(k/\epsilon)$.  
\textbf{P1} follows immediately from the definition, so we only need to check the matrix product. Let $S=\Omega T$ where $T$ is the \TS matrix and $\Omega$ is an i.i.d. Gaussian matrix. Since they are both subspace embedding matrices, then for any $M$ and $N$,  
\begin{align*}
\nbr{M^\top S^\top S N - M^\top T^\top T N}_F & \leq \sqrt{\frac \epsilon  k} \nbr{MT}_F \nbr{N T}_F,\\
\nbr{M^\top T^\top T N - M^\top N}_F & \leq \sqrt{\frac \epsilon  k} \nbr{M}_\Hcal \nbr{N }_\Hcal.
\end{align*}
By the subspace embedding property, $\nbr{MT}_F =  (1\pm \epsilon_0) \nbr{M}_\Hcal$ and $\nbr{NT}_F =  (1\pm \epsilon_0) \nbr{N}_\Hcal$ for some small constant $\epsilon_0$.  Combining all these bounds and choosing proper $\epsilon$, we know that $S=\Omega T$ satisfies \textbf{P2}.
\end{proof}

\begin{oneshot}{Lemma~\ref{lem:rfkernel}}
For a continuous shift-invariant kernels $\kappa(x,y) = \kappa(x-y)$ with regularization, there exists an $(1+\epsilon,\Delta)$-good subspace embedding $S: \Hcal \mapsto \RR^t$ with $t = O(k/\epsilon)$.  
\end{oneshot}

\begin{proof}
Let $R(\phi(x))$ be the random feature expansion matrix with $m$ random features.  Let $T \in \RR^{t\times m}$ be a subspace embedding matrix. We define $S(\phi(x)) := T R(\phi(x))$. Note that $S(\cdot)$ is a linear mapping from $\Hcal$ to $\RR^t$.

For the random feature expansion, we have
\begin{claim}[Claim 1 in~\cite{RahRec07}]
Let $k$ be a continuous shift-invariant positive-definite function $k(x,y) = k(x-y)$ defined on a compact set of $\RR^d$ of diameter $D$, with $k(0) = 1$ and such that $\nabla^2 k(0)$ exists. Let $\sigma_p^2$  denote the second moment of the Fourier transform of $k$.
Then $|k(x, y) - R(\phi(x))^\top R(\phi(y))| \leq \epsilon_0$ with probability $\leq 1-\delta$
when 
\[
  m = O\rbr{ \frac{d}{\epsilon_0^2} \log \rbr{ \frac{\sigma_p D}{\epsilon_0 \delta} }  }.
\]
\end{claim}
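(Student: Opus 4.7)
The plan is to prove this uniform approximation bound by the classical two-step argument of Rahimi--Recht: first get pointwise concentration by Hoeffding, and then extend to a uniform guarantee on the compact domain by an $\epsilon$-net plus Lipschitz argument. Invoking Bochner's theorem on the continuous shift-invariant positive-definite function $k(x-y)$, I would write $k$ as the Fourier transform of a probability measure $p(\omega)$. Setting $z_{\omega,b}(x) = \sqrt{2}\cos(\omega^\top x + b)$ with $\omega \sim p$ and $b \sim \mathrm{Unif}[0, 2\pi]$ yields $\EE[z_{\omega,b}(x) z_{\omega,b}(y)] = k(x-y)$ with $|z_{\omega,b}(x) z_{\omega,b}(y)| \le 2$, so $\hat k(x,y) := R(\phi(x))^\top R(\phi(y))$ is an average of $m$ i.i.d.\ bounded summands with mean $k(x-y)$.

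For any fixed $(x,y)$, Hoeffding's inequality gives $\Pr[|\hat k(x,y) - k(x-y)| \ge \epsilon_0/2] \le 2 \exp(-c m \epsilon_0^2)$. To make this uniform over the compact domain $\mathcal{M}$ of diameter $D$, I would lay down an $\epsilon_1$-net $\mathcal{N} \subseteq \mathcal{M}$ with $|\mathcal{N}| \le (c D/\epsilon_1)^d$, union-bound Hoeffding over the $|\mathcal{N}|^2$ pairs of net points, and then close the gap to arbitrary $(x,y)$ via a Lipschitz estimate for the error function $f(x,y) := \hat k(x,y) - k(x-y)$. The Lipschitz constant is controlled using $\|\nabla_x z_{\omega,b}(x)\| \le \sqrt{2}\,\|\omega\|$ and the regularity hypothesis $\EE\|\omega\|^2 = \sigma_p^2 = -\tr(\nabla^2 k(0)) < \infty$; a Markov/moment bound then gives that $f$ is $O(\sigma_p)$-Lipschitz on $\mathcal{M}\times \mathcal{M}$ with probability at least $1 - \delta/2$.

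Choosing $\epsilon_1 = \Theta(\epsilon_0/\sigma_p)$ so that the Lipschitz slack is at most $\epsilon_0/2$ gives a net of size $(c D \sigma_p/\epsilon_0)^{2d}$. Combining the Hoeffding union bound over this net with the Lipschitz event and forcing the overall failure probability below $\delta$ amounts to solving $(c D \sigma_p/\epsilon_0)^{2d} \exp(-c m \epsilon_0^2) \le \delta/2$, which yields the stated $m = O\!\left(\frac{d}{\epsilon_0^2} \log \frac{\sigma_p D}{\epsilon_0 \delta}\right)$.

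The main obstacle will be handling the Lipschitz constant: because it is a random quantity depending on $\max_i \|\omega_i\|$, bounding it by its worst case would blow up the logarithmic factor and destroy the claimed dependence on $\sigma_p$. The right move is to control it in expectation via the second-moment condition, pay only a constant factor in front of $\sigma_p$, and absorb the remaining randomness into the $\delta/2$ failure budget; the regularity assumption that $\nabla^2 k(0)$ exists (equivalently $\sigma_p < \infty$) is exactly what makes this step succeed.
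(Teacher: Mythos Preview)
The paper does not prove this statement at all; it is quoted verbatim as Claim~1 from \cite{RahRec07} and used as a black box inside the proof of Lemma~\ref{lem:rfkernel}. Your sketch is essentially the original Rahimi--Recht argument (Bochner, Hoeffding pointwise, $\epsilon$-net, Lipschitz control via the second moment $\sigma_p^2$) and is correct; the only cosmetic difference is that Rahimi--Recht parameterize by the difference $\Delta = x-y$ rather than the pair $(x,y)$, which halves the exponent on the net size but changes nothing inside the $O(\cdot)$.
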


For the subspace embedding matrix, we have
\begin{claim}[Lemma 32 in~\cite{ClaWoo13}]
For $A$ and $B$ matrices with $n$ rows, 
and given  $\epsilon> 0$, there is $t = \Theta(\epsilon^{−2})$, so that for a $t \times n$ generalized sparse embedding matrix $S$, or $t \times n$ fast JL matrix, or $tlog(nd) \times n$ subsampled randomized Hadamard matrix, or leverage-score sketching
matrix for $A$ under the condition that $A$ has orthonormal columns,
\[
\Pr\sbr{ \nbr{ (SA)^\top SB - A^\top B}_F \leq \epsilon \nbr{A}_F \nbr{B}_F} \geq 1-\delta
\]
for any fixed constant $\delta > 0$.
\end{claim}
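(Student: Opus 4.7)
The plan is to reduce the Frobenius matrix-product bound to an approximate pairwise inner-product property of $S$, verify that property individually for each of the four sketch families, and then close the argument by a second-moment / Markov computation. Entrywise,
\[
\nbr{(SA)^\top SB - A^\top B}_F^2 = \sum_{i,j}\rbr{(SA_{:i})^\top(SB_{:j}) - A_{:i}^\top B_{:j}}^2,
\]
so it suffices to show that for any fixed $x,y \in \RR^n$ the sketch satisfies $\EE[(Sx)^\top(Sy)] = x^\top y$ and $\EE[((Sx)^\top(Sy) - x^\top y)^2] \leq (C/t)\rbr{\nbr{x}^2\nbr{y}^2 + (x^\top y)^2}$. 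Summing over column pairs yields $\EE\nbr{(SA)^\top SB - A^\top B}_F^2 \leq (2C/t)\nbr{A}_F^2 \nbr{B}_F^2$, after which Markov's inequality with $t = \Theta(1/(\delta\epsilon^2))$, which is $\Theta(\epsilon^{-2})$ at constant $\delta$, gives the claimed tail bound.

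For the pairwise property I would argue family by family. For a fast JL matrix, rows are independent rescaled subgaussian vectors, so the expectation and variance follow from a direct second-moment expansion using row independence. For a generalized sparse embedding, I would write $(Sx)^\top(Sy) = \sum_{b}\rbr{\sum_{i\in h^{-1}(b)} s_i x_i}\rbr{\sum_{j\in h^{-1}(b)} s_j y_j}$ with random signs $s_i$ and bucket hash $h$, use pairwise independence of the signs to kill the $i\neq j$ cross terms in expectation, and bound the variance by a routine fourth-moment computation that gives the required $O(\nbr{x}^2\nbr{y}^2/t)$ contribution. For the subsampled randomized Hadamard transform, the diagonal sign flip $D$ and the Hadamard rotation $H$ make any fixed vector ``flat'' in $\ell_\infty$ with high probability, after which uniform coordinate subsampling concentrates via Hoeffding/Bernstein; the extra $\log(nd)$ factor in the sketch size is paid for a union bound over an $\epsilon$-net of the column space of $A$ on top of the flatness event.

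The leverage-score sketching case is the most delicate and is precisely where the hypothesis that $A$ has orthonormal columns is used. Here $S$ samples $t$ rows of the identity i.i.d., with row $i$ chosen with probability $p_i = \nbr{A_{i:}}^2/k$ (the leverage scores of the orthonormal $A$, which sum to $\nbr{A}_F^2 = k$), and the chosen row is rescaled by $1/\sqrt{tp_i}$. Then $(SA)^\top SB$ is the average of $t$ i.i.d.\ unbiased estimators of $A^\top B$, and a direct variance computation shows that the total contribution across all matrix entries equals $(1/t)\sum_i \nbr{A_{i:}}^2 \nbr{B_{i:}}^2/p_i$; substituting $p_i = \nbr{A_{i:}}^2/k$ collapses this to $(k/t)\nbr{B}_F^2 = (1/t)\nbr{A}_F^2 \nbr{B}_F^2$. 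Without orthonormality the identity $\sum_i \nbr{A_{i:}}^2 = k$ fails and the variance blows up, so this is the only case in which the extra assumption actually enters the proof.

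The main obstacle is the subsampled Hadamard case: a pointwise pair bound loses a factor of $n$, so one cannot simply apply the single-pair inner-product inequality and union bound over columns. Instead one needs to argue uniformly over all unit vectors in the (at most $2k$-dimensional) joint column space of $A$ and $B$, for which the standard route is a matrix-Bernstein bound on $\sum_k S_{k:}^\top S_{k:}$ restricted to that subspace, conditioned on $HD$ having flattened the relevant vectors. This net-plus-Bernstein step is what forces the $\log(nd)$ overhead in the sketch dimension, and it is the only place in the proof where a nontrivial concentration inequality beyond second moments is required.
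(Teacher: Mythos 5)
Your proposal cannot diverge from the paper's own proof for a simple reason: the paper gives none. This claim is imported verbatim as Lemma 32 of \cite{ClaWoo13} and used as a black box inside the proof of Lemma~\ref{lem:rfkernel}, so the "paper's proof" is a citation. Your reconstruction follows the standard route from that cited literature --- reduce the Frobenius matrix-product bound to a per-pair second-moment (JL-moment) property, sum over column pairs using $\sum_{i,j}(A_{:i}^\top B_{:j})^2\le\|A\|_F^2\|B\|_F^2$, and finish with Markov at $t=\Theta(1/(\delta\epsilon^2))$, with a Drineas--Kannan--Mahoney-style variance computation for the sampling case --- and that is essentially how Clarkson and Woodruff establish it, so the plan is sound. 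Two imprecisions are worth fixing if you were to write it out. First, the rows of a fast JL or subsampled randomized Hadamard matrix are not independent subgaussian vectors, so the "row independence" expansion does not literally apply; the SRHT case is handled by conditioning on the event that $HD$ flattens the relevant columns (a union bound over columns, which is exactly where the $\log(nd)$ blow-up in sketch size enters) and then running a second-moment argument for uniform subsampling --- the full $\epsilon$-net plus matrix-Bernstein machinery you invoke is what one needs for the subspace-embedding property \textbf{P1}, not for the approximate-product bound claimed here, so your "main obstacle" is heavier than the statement requires. Second, in the leverage-score case the bound $\frac{1}{t}\sum_i\|A_{i:}\|^2\|B_{i:}\|^2/p_i$ collapses to $\frac{1}{t}\|A\|_F^2\|B\|_F^2$ whenever one samples proportionally to the squared row norms of $A$, for arbitrary $A$; orthonormality enters only because it makes the leverage scores coincide with those squared row norms, not because the variance would otherwise "blow up" intrinsically. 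Neither issue is fatal, and your sketch matches the source's approach.
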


Then we have 
\begin{align*}
\nbr{ S(\phi(A))^\top S(\phi(B)) - \phi(A)^\top \phi(B)}^2_F \leq & \nbr{ S(\phi(A))^\top S(\phi(B)) - R(\phi(A))^\top R(\phi(B))}^2_F \\
&+ \nbr{ R(\phi(A))^\top R(\phi(B)) - \phi(A)^\top \phi(B)}^2_F.
\end{align*}
For the first term, we have
\begin{align*}
  \nbr{ S(\phi(A))^\top S(\phi(B)) - R(\phi(A))^\top R(\phi(B))}^2_F \leq &
	\epsilon^2 \nbr{R(\phi(A))}_F^2 \nbr{R(\phi(B))}_F^2 \\
	\leq &\epsilon^2 \nbr{\phi(A)}_\Hcal^2 \nbr{\phi(B)}_\Hcal^2  + O(\epsilon^2 ab\epsilon_0).
\end{align*}
where $a$ is the number of columns in $A$ and $b$ is the number of columns in $B$. 
Similarly the second term is bounded by $O(a\epsilon_0 + b\epsilon_0 + ab\epsilon_0^2)$. So the matrix product approximation is satisfied with
\[
\nbr{ S(\phi(A))^\top S(\phi(B)) - \phi(A)^\top \phi(B)}^2_F \leq \epsilon^2 \nbr{\phi(A)}_\Hcal^2 \nbr{\phi(B)}_\Hcal^2  + O(\epsilon^2 ab\epsilon_0) + O(a\epsilon_0 + b\epsilon_0 + ab\epsilon_0^2).
\]
 
Now consider the subspace embedding condition. Let $V$ be an orthonormal basis spanning the subspace. Then the condition is equivalent to saying $S(V)^\top S(V)$ have bounded eigenvalues in $[1\pm O(\epsilon)]$. Note that 
\[
  \nbr{S(V)^\top S(V) - V^\top V}^2_F \leq \epsilon^2 \nbr{V}_\Hcal^4 + O(\epsilon^2 k^2 \epsilon_0) + O(k\epsilon_0 + k\epsilon_0 + k^2\epsilon_0^2)
\]
where $\nbr{V}_\Hcal^4 = k^2$. 

Then choosing $t = O(k/\epsilon)$ and $m = \max\cbr{O(da^2/\Delta^2), O(db^2/\Delta^2), \Otil((\epsilon^2 ab/k^2\Delta)^2d), \Otil(dk^2/c_0^2)}$ satisfies the two conditions.
\end{proof}

\subsection{Adpative Sampling}
\begin{oneshot}{Lemma~\ref{lem:adaptive}}
Suppose there is a $(2, \Delta)$-approximation for $\phi(A)$ in the span of $\phi(P)$.
Then with probability $\ge 0.99$, the span of $\phi(Y)$ has a rank-$k$ $(1+\epsilon, \Delta)$-approximation.
\end{oneshot}

\begin{proof}
Let $Z$ denote the best rank-$k$ approximation for $\phi(A)$ in the span of $\phi(Y)$, and let $W$ denote the projection of $\phi(A)$ on the span of $\phi(P)$.
By Theorem 3 in~\cite{DesVem2006}, we have
\[
  \EE \sbr{\nbr{\phi(A) - Z}_\Hcal^2} \leq \nbr{\phi(A) - [\phi(A)]_k}_\Hcal^2 + \frac{\epsilon}{c}  \nbr{\phi(A) - W}_\Hcal^2
\]
where $\nbr{\phi(A) - W}_\Hcal^2 \leq c \nbr{\phi(A) - [\phi(A)]_k}_\Hcal^2 + \Delta$ by our assumption.
Our lemma is then proved by Markov's inequality.
\end{proof}

\subsection{Compute Approximation Subspace}

\begin{oneshot}{Lemma~\ref{lem:approxLA}}
If there is a rank-$k$ $(1+\epsilon, \Delta)$-approximation subspace in the span of $\phi(Y)$, then
\[
	\nbr{LL^\top \phi(A) - \phi(A)}^2  \leq (1+\epsilon)^2\nbr{ \phi(A) - \sbr{\phi(A)}_k }^2 + (1+\epsilon)\Delta.
\]
\end{oneshot}

\begin{proof}
For our choice of $w$, $T^i$ is an $\epsilon$-subspace embedding matrix for $\Pi^i$. Then their concatenation $B$ is an $\epsilon$-subspace embedding for $\Pi$, the concatenation of $\Pi^i$.  
Then we can apply the argument in Lemma 5 in~\cite{AvrNguWoo14} (also implicit in Theorem 1.5 in~\cite{KanVemWoo14}).  Below we give a simplified proof.

First, let $X$ denote the $(1+\epsilon, \Delta)$-approximation subspace in the span of $\phi(Y)$. Since $[Q^\top \phi(A)]_k$ is the best rank-$k$ approximation for $Q^\top \phi(A)$, 
\begin{align*}
& \nbr{Q[Q^\top \phi(A)]_k   - \phi(A) }^2 \\
 = & \nbr{Q[Q^\top \phi(A)]_k - QQ^\top \phi(A) }^2  + \nbr{QQ^\top \phi(A) - \phi(A) }^2 \\
\le & \nbr{X - QQ^\top \phi(A) }^2  + \nbr{QQ^\top \phi(A) - \phi(A) }^2 \\
= & \nbr{X - \phi(A) }^2. 
\end{align*}
Therefore,
\begin{align}
\nbr{Q[Q^\top \phi(A)]_k  - \phi(A)}_\Hcal^2  
\leq \nbr{\phi(A) - X}^2_\Hcal  
\leq  (1+\epsilon)\nbr{\phi(A) - [\phi(A)]_k}^2_\Hcal + \Delta. \label{eqn:inQ}
\end{align}

Second, we apply Theorem 7 in \cite{KanVemWoo14} on $Q^\top \phi(A)$. Note that that theorem is stated for a specific subspace embedding scheme but it holds for any subspace embedding. Then we have 
\begin{align}
 \nbr{WW^\top Q^\top \phi(A) - Q^\top \phi(A)}_\Hcal^2  
\leq  (1+\epsilon)\nbr{[Q^\top \phi(A)]_k  - Q^\top \phi(A)}_\Hcal^2. \label{eqn:inW}
\end{align}

We now bound the error using the above two claims. By Pythagorean Theorem, 
\begin{align}
 \nbr{LL^\top \phi(A) - \phi(A)}_\Hcal^2  
= \nbr{LL^\top  \phi(A) - QQ^\top \phi(A)}_\Hcal^2  +  \nbr{\phi(A) - Q Q^\top \phi(A)}_\Hcal^2. \label{eqn:error} 
\end{align}
Noting $L=QW$, the first term on the RHS is 
\begin{eqnarray*}
 \nbr{LL^\top  \phi(A) - QQ^\top \phi(A)}_\Hcal^2 
& = & \nbr{WW^\top Q^\top \phi(A) - Q^\top \phi(A)}_\Hcal^2\\
& \leq & (1+\epsilon)\nbr{[Q^\top \phi(A)]_k  - Q^\top \phi(A)}_\Hcal^2  \\
& = & (1+\epsilon) \nbr{Q[Q^\top \phi(A)]_k  - QQ^\top \phi(A)}_\Hcal^2,
\end{eqnarray*}
where the inequality is by (\ref{eqn:inW}) and the equalities are because multiplying $Q$ on the vectors in the span of $Q$ does not change their norms.
Plugging into (\ref{eqn:error}),
\begin{eqnarray*}
\nbr{LL^\top \phi(A) - \phi(A)}_\Hcal^2  &\leq & (1+\epsilon) \bigg( \nbr{Q[Q^\top \phi(A)]_k  - QQ^\top \phi(A)}_\Hcal^2 
  +  \nbr{\phi(A) - Q Q^\top \phi(A)}_\Hcal^2 \bigg) \\
&\leq &  (1+\epsilon) \nbr{Q[Q^\top \phi(A)]_k   - \phi(A) }_\Hcal^2 \\
&\leq & (1+\epsilon)^2 \nbr{\phi(A) - [\phi(A)]_k}^2_\Hcal + (1+\epsilon)\Delta,
\end{eqnarray*}
where the second inequality is by Pythagorean Theorem and the last is by (\ref{eqn:inQ}).
\end{proof}

\end{document}